\documentclass{article}

\PassOptionsToPackage{numbers,sort&compress}{natbib}
\usepackage[preprint]{neurips_2026}
\workshoptitle{E-Values: From Statistics to ML}

\usepackage[utf8]{inputenc}
\usepackage[T1]{fontenc}
\usepackage{amsmath,amssymb,amsfonts,amsthm,mathtools}
\usepackage{booktabs,array}
\usepackage{graphicx}
\usepackage{algorithm}
\usepackage{algorithmic}
\usepackage{microtype}
\usepackage{xcolor}
\usepackage{url}
\usepackage{hyperref}

\hypersetup{colorlinks=true,citecolor=blue!55!black,linkcolor=blue!55!black,urlcolor=blue!55!black}

\newcommand{\method}{e-ATS}
\newcommand{\anchor}{OTS}
\newcommand{\E}{\mathbb E}
\newcommand{\Pp}{\mathbb P}
\newcommand{\F}{\mathcal F}
\newcommand{\1}{\mathbf 1}
\newcommand{\kl}{\operatorname{kl}}
\newcommand{\tauth}{\tau_{\mathrm{auth}}}
\newcommand{\tdep}{\tau_{\mathrm{dep}}}
\newcommand{\alphaE}{\alpha_E}
\newcommand{\kappaM}{\kappa_M}
\newcommand{\epsM}{\varepsilon_M}
\newtheorem{theorem}{Theorem}
\newtheorem{proposition}[theorem]{Proposition}

\title{When May a Bandit Leave Its Anchor?\\
E-Process-Authorized Thompson Sampling under Non-stationarity}

\author{%
  Mayand Gulati\textsuperscript{1} \quad
  Kerong Wang\textsuperscript{1} \quad
  WeiChen Au\textsuperscript{2} \\
  \textsuperscript{1}UC Santa Barbara \qquad
  \textsuperscript{2}Purdue University \\
}

\begin{document}
\frenchspacing
\maketitle
\begin{abstract}
Stationarity rewards memory, but after change the same history can mislead.
We ask when forgetting should be permitted.
E-process-authorized Thompson
sampling (\method{}) gives each arm full-history and discounted Beta
states.
An anytime-valid e-process first authorizes the discounted
state, then a reversible relevance score controls its influence.
Before authorization, \method{} exactly follows optimistic Thompson sampling
(\anchor{}). Under a Beta--Bernoulli prior-predictive stationary model, \method{}'s
probability of ever departing from \anchor{} is at most the chosen $\alphaE$,
without fitted thresholds.
Relative to \method{}, removing authorization increased mean normalized
dynamic pseudo-regret by $38.4\%$ on the registered suite but reduced it
by $7.5\%$ on the literature-derived replay suite.
Therefore, evidence controls when
adaptation begins, not whether it always helps.
\end{abstract}

\section{When should a bandit stop trusting its history?}

In a non-stationary bandit, history can become misleading. Existing methods
discount or window past observations, adapt their forgetting rate, or restart
learning on a schedule
\citep{garivier2008ucb,raj2017taming,lu2017adaptive,besbes2014rexp3,
russac2019weighted,fiandri2024swts}. Detector-based methods instead wait for
evidence of change \citep{cao2018mucb,liu2017cusum,besson2022glr}. We ask
whether such evidence can also control the probability that a policy leaves a
trusted stationary anchor.

E-processes provide the statistical tool. Under a declared null model,
an e-process crosses $1/\delta$ with probability at most $\delta$, without
fixing a stopping time or horizon
\citep{ville1939,vovk2021evalues,grunwald2024safe,ramdas2023savi}. E-detectors
combine e-processes started at different times to monitor changes at unknown times
\citep{shin2023edetectors}. Mixtures over geometrically spaced candidate
change locations are also established \citep[Section~4.4]{ramdas2022exchangeability}.
We use the resulting e-process threshold crossing to authorize departure.
Our anchor is the Bernoulli form of Optimistic Bayesian
Sampling \citep{thompson1933,may2012optimistic}, which we call optimistic
Thompson sampling (\anchor{}). It uses each arm's full-history Beta posterior.
In \method{}, the discounted posterior
cannot affect an action before authorization, so the crossing bound also bounds
departure from the anchor.
Our contribution is a controller that translates statistical authorization into
prior-predictive control of departure from \anchor{}, sufficient evidence
conditions under adaptive sampling, and an empirical evaluation of
\method, including authorization's environment-dependent cost. When the anchor
is trusted under the declared model, limiting unnecessary departures is valuable.
Conservative bandits constrain reward relative to a baseline \citep{wu2016conservative}.
We constrain departure from the anchor’s trajectory.

\paragraph{Setup.}
There are $K$ arms and $T$ rounds. At round $t$, the policy selects $A_t$. If
this is arm $k$'s $n$th pull, its reward is
$Y_t=X_{k,n}\in\{0,1\}$, and $N_k(t)$ counts its pulls through round $t$. Our
stationary model is
\begin{equation}
 P_k\stackrel{\mathrm{iid}}{\sim}\mathrm{Beta}(1,1),\qquad
 X_{k,n}\mid P_k\stackrel{\mathrm{ind}}{\sim}\mathrm{Bernoulli}(P_k).
 \label{eq:null}
\end{equation}
Each arm draws one probability and keeps it forever. The guarantee averages
over these draws and does not hold separately for every fixed set of arm
probabilities.

\section{From evidence to action}
\label{sec:method}

We use this stationary model to decide when the policy may supplement its
full-history anchor with short-memory estimates.
Each arm maintains two Beta states initialized at $\mathrm{Beta}(1,1)$:
a full-history posterior that retains every reward from that arm, and a
discounted short-memory pseudo-posterior.
Each round, the short-memory counts above the prior are multiplied by
$\gamma_S=0.995$ before the played arm incorporates its reward,
giving an effective memory of about 200 rounds.
Evidence grants arm-specific authorization for the short-memory state
to affect action selection, relevance measures continuing disagreement,
and a decision weight determines its influence.

\paragraph{Has the arm produced evidence against stationarity?}
To decide whether arm $k$ can use its short-memory posterior, we construct an
arm-specific e-process that compares the declared stationary model with
models allowing for exactly one change at an unknown time.

The change time is unknown, so we consider several candidate change-points. Candidate $j$ places a possible change-point at arm-local pull $s_j$. The candidate models the rewards through $s_j - 1$ using one constant Bernoulli probability and models the rewards from pull $s_j$ onward using a second constant probability. The two probabilities may differ.
For a contiguous block of arm-local rewards
$X_{k,a:b}=(X_{k,a},\ldots,X_{k,b})$, let
$S_{a:b}=\sum_{\ell=a}^b X_{k,\ell}$ and
$F_{a:b}=b-a+1-S_{a:b}$ be the block's number of successes and failures.
Under a $\mathrm{Beta}(1,1)$ prior, the Beta--Bernoulli marginal likelihood of the block is
$m(X_{k,a:b})=\mathrm B(1+S_{a:b},1+F_{a:b})$.
The evidence for a split at change-point $s_j$, relative to a
stationary model over all $n$ pulls, is

\begin{equation}
 L_{k,n}^{(j)}=
 \frac{m(X_{k,1:s_j-1})\,m(X_{k,s_j:n})}{m(X_{k,1:n})}.
 \label{eq:ecomponent}
\end{equation}

Every pull of an arm could be a change-point, although this would require tracking $n-1$ candidates after $n$ pulls.
Instead, we use the geometric grid $s_1=2$ and $s_{j+1}=\lceil1.25\,s_j\rceil$.
If a change occurs at pull $s$, the most recent grid point $s_j\le s$ lies less than 20 percent
earlier.
So, every possible change-point has a nearby candidate while
only $O(\log n)$ candidates must be tracked rather than $O(n)$.

Since we do not know which candidate is closest to the actual change-point,
we assign candidate $j$ the fixed weight
$\pi_j=1/[j(j+1)]$ so
$\sum_j\pi_j=1$, and let $J(n)=\#\{j:s_j\le n\}$.
Unreached candidates have $L_{k,n}^{(j)}=1$, and their weights sum to
$1/(J(n)+1)$.
Therefore,

\begin{equation}
 E_{k,n}=\sum_{j=1}^{J(n)}\pi_j\,L_{k,n}^{(j)}
 \;+\;\frac{1}{J(n)+1}.
 \label{eq:monitor}
\end{equation}

The arm-specific evidence process $E_{k,n}$ starts at one and is a nonnegative martingale under \eqref{eq:null}. Arm $k$ becomes authorized when $E_{k,n}$ reaches $K/\alpha_E$, setting the authorization latch to $C^E_{k,t}=\1\{\max_{n\le N_k(t)}E_{k,n}\ge K/\alpha_E\}$.
The maximum keeps $C^E_{k,t}$ at one after its first crossing.

\paragraph{Do recent rewards still disagree with the long-history model?}

The authorization latch records whether the arm's evidence process has ever
crossed its threshold. Since the latch is permanent, it does not indicate
whether recent rewards still disagree with the long-history posterior's predictions. Therefore, we maintain a relevance score for the newly arriving rewards.

Before observing arm $k$'s $n$th reward, we record the long-history
posterior's prediction.
After observing the reward, we compute its residual
from the long-history posterior's prediction and update an exponentially weighted average with a memory of approximately $\sqrt T$ arm-local pulls.
We normalize the magnitude of this average by a corresponding scale, giving the relevance score $r_{k,n}\in[0,1]$.
Values of relevance near zero mean little sustained disagreement, while larger values indicate persistent recent disagreement
with the long-history model. Appendix~\ref{app:algorithm} gives the exact update and normalization.

\paragraph{How strongly should the short-memory posterior influence the choice?}

At round $t$, authorization and current relevance form the gate
\begin{equation}
g_{k,t}=C^E_{k,t-1} \cdot r_{k,N_k(t-1)},\qquad
R_{k,t}=\frac{n^S_{k,t}}{n^S_{k,t}+\rho},\qquad
w_{k,t}=g_{k,t}^{\nu}
+\zeta(1-g_{k,t}^{\nu})g_{k,t}R_{k,t},
\label{eq:weight}
\end{equation}
Here, $n^S_{k,t}$ is the amount of data retained by the short-memory
posterior, $\rho$ sets the retained-data scale, $\nu$ controls how strongly
the weight responds to the gate, and $\zeta$ controls how retained data further
increase the short-memory posterior's influence. Appendix~\ref{app:algorithm} gives their
fixed values.
For each arm, the decision weight $w_{k,t}$ is used to combine the two posterior draws and the two posterior means.
The arm's selection score is the larger of these two weighted combinations, and the policy selects the arm with the largest score.

Thus, authorization determines whether adaptation is permitted, relevance
gauges whether recent disagreement persists, and the decision weight
determines how strongly the policy adapts. Before any arm is authorized,
$g_{k,t}=w_{k,t}=0$ for every arm, so no short-memory draw is used and
\method{} exactly reproduces \anchor{} under the same long-posterior draws and
tie-breaking. Appendix~\ref{app:algorithm} gives the complete algorithm and
update order.

\section{When the policy can depart from its anchor}

We first show that under the declared stationary model, \method{} is unlikely
to receive permission to depart from \anchor{}.
We then give sufficient conditions for an arm's evidence to cross the
authorization threshold when its rewards change.

Let $\tauth$ be the first round where
$C^E_{k,t-1}=1$ for at least one arm, and let $\tdep$ be the first round where \method{} and \anchor{} select different arms.
We couple the policies by starting them from identical states and giving
them the same arm-specific reward sequences, policy-randomness stream,
and tie-breaking rule.

\begin{theorem}[Controlled departure]
\label{thm:main}
Pathwise, $\tdep\ge\tauth$, and before $\tauth$ the two policies have identical
actions, observed rewards, and long-posterior states. Under \eqref{eq:null},
\begin{equation}
 \Pp(\tdep\le T)\le\Pp(\tauth\le T)\le\alphaE.
 \label{eq:departure}
\end{equation}
\end{theorem}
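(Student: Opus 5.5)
The argument has two parts: a pathwise coupling that gives $\tdep\ge\tauth$, and a probabilistic bound via Ville's inequality and a union bound over arms.

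\textbf{Pathwise part.} We argue by induction on rounds $t<\tauth$. Fix any such $t$. By definition of $\tauth$, $C^E_{k,t-1}=0$ for every arm $k$, so $g_{k,t}=0$. Then \eqref{eq:weight} gives $w_{k,t}=0^{\nu}+\zeta(1-0)\cdot 0\cdot R_{k,t}=0$; the first term vanishes because $\nu>0$. Each arm's selection score is therefore the larger of its long-posterior draw and its long-posterior mean, which is exactly the \anchor{} score.

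The induction hypothesis is that, at the start of round $t$, both policies have identical long-posterior states and have consumed the same amount of the shared randomness stream. Then the long-posterior draws coincide, and with the common tie-breaking rule the two policies choose the same $A_t$. Because rewards are supplied as arm-specific sequences $X_{k,n}$, the same arm at the same local pull count yields the same reward. The long-posterior updates therefore coincide, and the hypothesis carries over to $t+1$.

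One point needs care here. The short-memory draws made by \method{} must not shift the randomness stream used for the long draws. I would rely on the update order in Appendix~\ref{app:algorithm}: either the short-memory draw uses a separate substream, or it is skipped when $w_{k,t}=0$. With either convention, the policies agree for all $t<\tauth$, so $\tdep\ge\tauth$. This gives $\{\tdep\le T\}\subseteq\{\tauth\le T\}$ and hence the first inequality.

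\textbf{Probabilistic part.} We have
\[
\{\tauth\le T\}\subseteq\bigcup_{k=1}^K\Bigl\{\sup_n E_{k,n}\ge K/\alphaE\Bigr\}.
\]
By the union bound it suffices to show $\Pp(\sup_n E_{k,n}\ge K/\alphaE)\le\alphaE/K$ for each arm. That follows from Ville's inequality once $(E_{k,n})_n$ is a nonnegative martingale with $E_{k,0}=1$ under \eqref{eq:null}.

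The key step is to verify this martingale property in the arm-local filtration, despite adaptive sampling. Under \eqref{eq:null}, the sequence $X_{k,1},X_{k,2},\dots$ is exchangeable with law given by the Beta--Bernoulli marginal $m$, whatever the policy does. The policy only decides \emph{when} the next element of arm $k$'s fixed sequence is revealed, and the $P_k$ are independent across arms.

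Each $L^{(j)}_{k,n}$ is a ratio $q_j(X_{1:n})/m(X_{1:n})$, where $q_j$ is the predictive law that uses one Beta(1,1) block before $s_j$ and a fresh one after. Since $q_j$ is a valid joint law whose marginals are consistent across $n$, the ratio is a martingale under $m$ in the filtration $\sigma(X_{k,1:n})$. For $n<s_j$ it equals $1$, which matches the convention that unreached candidates have $L=1$.

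The mixture $E_{k,n}=\sum_j\pi_j L^{(j)}_{k,n}$ has total weight $1$ and is then also a martingale with $E_{k,0}=1$. Ville's inequality applied to the arm-local sequence gives the bound. Converting back from rounds to arm-local pull counts is harmless, because $\max_{n\le N_k(t)}$ is dominated by the supremum over all $n$.

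\textbf{Main obstacle.} The delicate step is justifying that adaptive, cross-arm-dependent sampling does not break the arm-local martingale. I would handle it through the "reward table" representation: draw all $X_{k,n}$ up front, independently of the policy's randomness. The event $\{\tauth\le T\}$ then depends on arm $k$'s table only through its prefix, which is contained in the path-independent event $\{\sup_n E_{k,n}\ge K/\alphaE\}$. No optional-stopping argument across arms is needed.
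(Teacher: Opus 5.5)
Your proof is correct. The pathwise coupling matches the paper's, including the point that the short-memory draw is skipped when $w_{k,t}=0$, so the shared randomness stream stays synchronized. Algorithm~\ref{alg:eats} fixes that skip convention, so you need not hedge between the two options.

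The real difference is in how you handle adaptive sampling.

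\textbf{The paper's route.} It works in calendar time. It builds the filtrations $\F_{t-1}\subseteq\mathcal G_t\subseteq\F_t$ and argues two things: $A_t$ is $\mathcal G_t$-measurable, and the selected arm's next reward has its prior-predictive law given $\mathcal G_t$. It concludes that $\widetilde E_{k,t}=E_{k,N_k(t)}$ is a calendar-time martingale, then applies Ville to $\max_{t\le T}\widetilde E_{k,t}$.

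\textbf{Your route.} You instead use the reward-table representation that the setup and the coupling already assume. Since $\max_{n\le N_k(t)}E_{k,n}\le\sup_n E_{k,n}$, the event $\{\tauth\le T\}$ lies in $\bigcup_k\{\sup_n E_{k,n}\ge K/\alphaE\}$. Each set in this union is a function of arm $k$'s reward sequence alone, so Ville in arm-local time finishes the proof.

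\textbf{What each buys.} Your argument is shorter and more robust. It needs no filtration bookkeeping. It holds for any arm-selection rule whatsoever, even one that anticipates or is correlated with the rewards, so your remark that the table is drawn independently of the policy randomness is not actually needed. It also bounds crossings over an infinite horizon. The paper's calendar-time construction is the machinery it reuses in Theorem~\ref{thm:adaptive-power}. There rewards are described through conditional means $\mu_{k,t}=\Pp(Y_t=1\mid\mathcal G_t)$ rather than a fixed exchangeable table, so that construction extends more directly to settings without a natural table representation.
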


\paragraph{Why does the guarantee hold?}
For each candidate change-point, $L_{k,n}^{(j)}$ is a likelihood-ratio
martingale under the prior-predictive stationary model.
Their fixed weighted average, $E_{k,n}$, is therefore also a nonnegative martingale starting at one.
Adaptive arm selection does not invalidate this property because an arm is
selected before its next reward is revealed.
Ville's inequality bounds each arm's crossing probability by $\alphaE/K$, and a union bound over the $K$ arms bounds the probability of any authorization by $\alphaE$.

Before authorization, every gate and decision weight is zero, so the coupled
policies have the same trajectory. A departure therefore requires an earlier
e-process crossing, turning the preceding statistic bound into
\eqref{eq:departure}. The resulting threshold $K/\alphaE$ requires no
horizon-specific fitting. \qed

The departure bound averages over the $\mathrm{Beta}(1,1)$ probabilities drawn
in \eqref{eq:null}. It does not hold separately for every fixed set of reward
probabilities, and it certifies permission to depart rather than the regret or
behavior afterward. Appendix~\ref{app:proof} gives the formal filtration and
coupling argument.

\paragraph{When is the evidence sufficient for authorization?}

Here, we ask when an actual change can produce enough evidence to authorize adaptation.
After $n$ pulls, candidate change-point $s_j$ divides the arm's rewards into segments of lengths $n_0=s_j-1$ and $n_1=n-s_j+1$.
Let $\widehat p_0$ and $\widehat p_1$ be their empirical means and $\widehat p$ the pooled mean.
Define
\[
I_{j,n}
=
n_0\kl(\widehat p_0\Vert\widehat p)
+
n_1\kl(\widehat p_1\Vert\widehat p),
\]
where $\kl$ is the Kullback--Leibler divergence between Bernoulli
distributions. $I_{j,n}$ is the gain in maximized log likelihood from
fitting separate reward probabilities before and after $s_j$, instead of one
probability to all $n$ rewards.

Appendix~\ref{app:evidence} proves
\[
\log E_{k,n}
\ge
I_{j,n}
-\log((n_0+1)(n_1+1))
-\log(j(j+1)).
\]
The two penalties come from averaging the segment probabilities and weighting
candidate $j$ by $1/[j(j+1)]$.
Theorem~\ref{thm:adaptive-power} in Appendix~\ref{app:evidence} gives a sufficient
condition for authorization across all candidates and completed observation
counts of one arm, even when earlier rewards determine when that arm is pulled.
Theorem~\ref{thm:finite-power} gives a sharper sufficient condition when the change occurs
at a fixed candidate and both segment lengths are chosen before seeing the rewards.
Because \method{} does not force exploration, neither theorem guarantees that
a changed arm will be sampled enough to satisfy the stated condition
or authorized within a fixed number of rounds.

\section{Experiments}
\label{sec:experiments}

We report mean normalized dynamic pseudo-regret
$\bar R_T=T^{-1}\sum_{t=1}^T(\max_k\mu_{k,t}-\mu_{A_t,t})$, where
$\mu_{k,t}$ is arm $k$'s mean reward. Lower is better. The registered suite
contains 16,380 non-stationary environments across nine mechanisms and 20
choices of $K$ and $T$, plus 1,820 stationary environments. The primary
comparator, the open ramp, removes the authorization latch and lets relevance
act immediately.
Before evaluation, we chose \method{}'s regret minus the open ramp's regret
as the primary contrast and required it to be below $-0.0002$.
The replay suite compares \method{} with 16 baselines across 4,500
matched runs on 606 mean trajectories from nine literature-derived
environment families.
We compare policies only within a suite.
Appendix~\ref{app:experiments} gives the full design and results.

\paragraph{In these stationary experiments, \method{} almost never leaves its anchor.}
Across 1,820 stationary environments, only one arm was authorized in total.
Mean regret was $0.01157$ for \method{} and \anchor{}, versus $0.02937$ for the
open ramp.

\begin{table}[H]
\centering
\small
\caption{Mean normalized dynamic pseudo-regret. Lower is better, and the
columns use different evaluation suites.}
\label{tab:primary-regret}
\begin{tabular}{@{}lrr@{}}
\toprule
Policy & Registered (16,380) & Replay (4,500) \\
\midrule
\anchor{}       & 0.0494 & 0.2157 \\
Open ramp     & 0.0540 & 0.1414 \\
\method{}     & 0.0390 & 0.1530 \\
Certified GLR & 0.0418 & 0.1750 \\
Practical GLR & 0.0392 & 0.1529 \\
\bottomrule
\end{tabular}
\end{table}

\paragraph{Authorization reduced regret on the registered suite and increased it on replay.}
Using the open ramp as the denominator, \method{}'s regret is $27.7\%$ lower on
the registered suite and $8.2\%$ higher on the replay suite. On the raw scale,
subtracting the open ramp's regret from \method{}'s gives $-0.01497$ with a 95\% interval of
$[-0.01533,-0.01461]$ on the registered suite and $+0.01154$ with an
interval of $[0.01089,0.01218]$ on the replay suite.
Appendix~\ref{app:experiments} examines the reversal.

\paragraph{The practical GLR is close, but its guarantee is different.}
We evaluate two alternatives that replace only e-process authorization
with generalized likelihood-ratio (GLR) rules \citep{besson2022glr}.
The certified GLR is valid for every fixed stationary Bernoulli mean,
beyond our prior-predictive null.
The practical GLR has no finite-sample guarantee.
On the registered and replay suites, regret differences relative
to \method{} are $+0.0028,+0.0221$ for the certified GLR
and $+0.0002,-0.00006$ for the practical GLR.
Tuning the GLR separately for 20 geometries
gives $0.0387$ regret versus \method{}’s $0.0390$ on the
registered suite, without an analytic guarantee.
Appendix~\ref{app:glr-comparator} gives details.

\section{What this establishes}

Under the declared model, the probability of departure is at most $\alpha_E$,
and expected normalized pseudo-regret differs from the anchor’s by at
most $\alpha_E$.
The policy forces no exploration and has mean rank 9.72 among 17 replay methods.
Future work should develop authorization that is restartable and valid for
fixed means, together with non-stationary regret guarantees.

\newpage
\bibliographystyle{plainnat}
\bibliography{references}

\newpage
\appendix
\raggedbottom
\section{Complete algorithm and update order}
\label{app:algorithm}

In Section~2 we explained how evidence is converted into an action.
This appendix provides the exact calculations and their order.
At the start of round $t$, every policy state depends only on observations completed through round
$t-1$.
The policy first scores the arms, then selects one arm and observes its reward, and finally updates its evidence and posterior states.

\paragraph{How is recent disagreement updated?}
Suppose arm $k$ is selected for the $n$th time and reward $X_{k,n}$ is observed.
Before updating either posterior with the new $X_{k,n}$, let $\widehat\mu^L_{k,n}$ be the mean of its long-history posterior.
We compare the reward with this prediction and update the following:
\begin{align}
 M_{k,n}
 &=
 \lambda_D M_{k,n-1}
 +(1-\lambda_D)(X_{k,n}-\widehat\mu^L_{k,n}),
 \nonumber\\[-2pt]
 Q_{k,n}
 &=
 \lambda_D^2Q_{k,n-1}+(1-\lambda_D)^2,
 \qquad
 s^M_{k,n}
 =
 \sqrt{\frac{Q_{k,n}}{2}
 \log\!\frac{K\pi^2n^2}{3\kappaM}} .
 \label{eq:mean}
\end{align}
Here,
\[
 \lambda_D
 =
 \min\{0.999,\max\{0,1-T^{-1/2}\}\}.
\]
Here, $M_{k,n}$ is an exponentially weighted average of the differences between observed rewards and the long-history posterior's predictions.
The measure $Q_{k,n}$ tracks the sum of the squared weights used in the average.
$Q_{k,n}$ therefore accounts for how much reward noise remains after averaging and is used to form the normalization scale $s^M_{k,n}$.
To clarify, $Q_{k,n}$ is not an empirical estimate of the residual variance.
The base value $\lambda_D=1-T^{-1/2}$ makes it so that approximately the most recent $\sqrt T$ observations of the arm contribute significantly to $M_{k,n}$.
The minimum and maximum in the definition of $\lambda_D$ restrict it to $[0,0.999]$ for extremely short or long horizons.
For the normalization scale $s^M_{k,n}$, the constant $\kappaM$ controls how the average prediction error is converted
into relevance.
A smaller value produces a larger $s^M_{k,n}$ and therefore less relevance for the same value of $M_{k,n}$.
A larger value has the opposite effect.
Empirically, we set $\kappaM=0.05$ because the relevance mechanism inherits its
prediction-error scale from an earlier version of the controller.
In that construction, $0.05$ was divided across the $K$ arms and all arm-local pull
counts when determining how large a prediction error should be considered
unusual.
We have retained this scale without retuning it on the primary
evaluation for \method{}.
The current policy uses it only to normalize $r_{k,n}$.

The relevance score is
\begin{equation}
 r_{k,n}
 =
 \1\{|M_{k,n}|>\epsM\}
 \min\{1,|M_{k,n}|/s^M_{k,n}\},
 \qquad
 \epsM=10^{-12}.
 \label{eq:ramp}
\end{equation}
Here, taking the absolute value of $M_{k,n}$ allows the score to respond to reward probabilities that move either up or down.
The cap keeps $r_{k,n}\in[0,1]$.
The constant $\epsM$ is used to guard against numerical degeneracy.

\paragraph{How is an arm scored before the next reward?}
At the beginning of round $t$, arm $k$ has long-history and short-memory posterior states
\[
 \mathrm{Beta}(\alpha^L_{k,t},\beta^L_{k,t})
 \quad\text{and}\quad
 \mathrm{Beta}(\alpha^S_{k,t},\beta^S_{k,t}).
\]
Their posterior means are denoted by
$\widehat\mu^L_{k,t}$ and $\widehat\mu^S_{k,t}$.
The amount of data retained above the short-memory posterior's initial $(1,1)$ state is
\[
 n^S_{k,t}
 =
 \max\{\alpha^S_{k,t}+\beta^S_{k,t}-2,0\}.
\]
Here, the maximum in $n^S_{k,t}$ protects against numerical roundoff.

Equation~\eqref{eq:weight} uses the current authorization latch, relevance, and
retained data to compute $g_{k,t}$, $R_{k,t}$, and $w_{k,t}$.
The term $g_{k,t}^{\nu}$ gives the short-memory posterior positive influence whenever the gate is positive.
The second term further increases the short-memory posterior's influence as the retained-data factor $R_{k,t}$ grows.
Thus, $R_{k,t}$ scales this additional contribution rather than acting as a separate gate.
When $g_{k,t}=0$, the decision weight $w_{k,t}$ is zero.
On the other hand, when $g_{k,t}=1$, the decision weight $w_{k,t}$ is one.

The policy always draws
\[
 Z^L_{k,t}
 \sim
 \mathrm{Beta}(\alpha^L_{k,t},\beta^L_{k,t}).
\]
It draws
$Z^S_{k,t}\sim
\mathrm{Beta}(\alpha^S_{k,t},\beta^S_{k,t})$
only when $w_{k,t}>0$. The superscripts $S$ and $L$ denote the short-memory and long-history posteriors, respectively.
The arm's score is
\begin{equation}
 U_{k,t}
 =
 \begin{cases}
 \max\{Z^L_{k,t},\widehat\mu^L_{k,t}\},
 & w_{k,t}=0,\\[3pt]
 \max\!\left\{
 (1-w_{k,t})Z^L_{k,t}+w_{k,t}Z^S_{k,t},
 (1-w_{k,t})\widehat\mu^L_{k,t}
 +w_{k,t}\widehat\mu^S_{k,t}
 \right\},
 & w_{k,t}>0.
 \end{cases}
 \label{eq:score}
\end{equation}
The maximum in $U_{k,t}$ preserves the optimistic clipping used by \anchor{}.
For any arm with $w_{k,t}=0$, the score reduces to $\max\{Z^L_{k,t},\widehat\mu^L_{k,t}\}$, which is exactly the score used by
\anchor{}.
Before any arm is authorized, this holds for every arm, so \method{} and \anchor{} select the same action when
given the same long-posterior draws.
This also explicitly follows from the coupling in Theorem~\ref{thm:main}.

\paragraph{How does one round proceed?}
Algorithm~\ref{alg:eats} gives the complete sequence of decisions and updates.
At the beginning of each round, the policy computes every arm's score using only rewards observed in earlier rounds.
It then selects an arm and observes its reward.
For the selected arm, the policy computes the prediction error using the long-history posterior mean recorded before that reward was observed.
The policy then updates the relevance score and e-process, discounts the short-memory posteriors, and finally adds the new reward to the selected arm's long- and short-memory posteriors.

\begin{algorithm}[H]
\caption{E-process-authorized Thompson sampling (\method{})}
\label{alg:eats}
\begin{algorithmic}[1]
\REQUIRE Arms $K$, horizon $T$, $\alphaE=0.05$, $\kappaM=0.05$,
$\gamma_S=0.995$, $\nu=0.3$, and $\zeta=0.8$
\STATE Set
$\lambda_D=\min\{0.999,\max\{0,1-T^{-1/2}\}\}$
\STATE Set
$\rho=\max\{4,16\sqrt{(T/5000)(4/K)}\}$
\STATE Set $s_1=2$, $s_{j+1}=\lceil1.25s_j\rceil$, and
$\pi_j=1/[j(j+1)]$
\FOR{$k=1,\ldots,K$}
  \STATE Set
  $(\alpha^L_k,\beta^L_k)=(\alpha^S_k,\beta^S_k)=(1,1)$
  \STATE Set
  $E_{k,0}=1$, $C^E_k=0$, and $M_k=Q_k=r_k=N_k=0$
\ENDFOR
\FOR{$t=1,\ldots,T$}
  \FOR{$k=1,\ldots,K$}
    \STATE Compute $\widehat\mu^L_{k,t}$,
    $\widehat\mu^S_{k,t}$, and $n^S_{k,t}$
    \STATE Compute $g_{k,t}$, $R_{k,t}$, and $w_{k,t}$
    from Equation~\eqref{eq:weight}
    \STATE Draw $Z^L_{k,t}$
    \IF{$w_{k,t}>0$}
      \STATE Draw $Z^S_{k,t}$
    \ENDIF
    \STATE Compute $U_{k,t}$ from Equation~\eqref{eq:score}
  \ENDFOR
  \STATE Select $A_t=\arg\max_k U_{k,t}$ using the fixed tie rule
  \STATE Observe reward $Y_t$
  \STATE Set $k=A_t$, $n=N_k+1$, and $X_{k,n}=Y_t$
  \STATE Record
  $\widehat\mu^L_{k,n}\leftarrow\widehat\mu^L_{k,t}$
  \STATE Update $M_{k,n}$, $Q_{k,n}$, and $r_{k,n}$
  from Equations~\eqref{eq:mean}--\eqref{eq:ramp}
  \STATE Update $E_{k,n}$ from
  Equations~\eqref{eq:ecomponent}--\eqref{eq:monitor}
  \IF{$E_{k,n}\ge K/\alphaE$}
    \STATE Set $C^E_k\leftarrow1$
  \ENDIF
  \FOR{$\ell=1,\ldots,K$}
    \STATE
    $\alpha^S_\ell\leftarrow
    1+\gamma_S(\alpha^S_\ell-1)$
    \STATE
    $\beta^S_\ell\leftarrow
    1+\gamma_S(\beta^S_\ell-1)$
  \ENDFOR
  \STATE
  $\alpha^S_k\leftarrow\alpha^S_k+Y_t$
  and
  $\beta^S_k\leftarrow\beta^S_k+1-Y_t$
  \STATE
  $\alpha^L_k\leftarrow\alpha^L_k+Y_t$
  and
  $\beta^L_k\leftarrow\beta^L_k+1-Y_t$
  \STATE Set $N_k\leftarrow n$
\ENDFOR
\end{algorithmic}
\end{algorithm}

Since the candidate change-points lie on a geometric grid, each arm tracks
only $O(\log T)$ candidates.
This gives $O(K\log T)$ total stored state and $O(K+\log(1+N_{A_t}(t)))$ computation per round, where $N_{A_t}(t)$ is the
number of times the selected arm $A_t$ has been pulled through round $t$.

\paragraph{Which parts of the algorithm are required for the guarantee?}
The controlled-departure guarantee depends on the prior-predictive likelihood
ratios, fixed nonnegative candidate weights, the weight
reserved for candidates not yet reached, and the authorization threshold
$K/\alphaE$.
The remaining constants affect how the short-memory posterior is used after authorization.
They can change the policy's empirical behavior, but not the e-process crossing bound.
The fixed values and the formulas that depend on $K$ and $T$ were specified before the primary evaluation in Section~\ref{sec:experiments}.
The same prespecified configuration was used across all registered environments and geometries, without environment-specific retuning after outcomes were observed.
Appendix~\ref{app:experiments} describes the freeze and the separation between development and evaluation data.
The primary evaluation therefore assesses one fixed policy specification rather than selecting the best configuration from the evaluation results.
These constants are not claimed to be optimal.

The experiments use the known horizon $T$ to set $\lambda_D$ and $\rho$.
Experiments with an unknown horizon were not evaluated.
Using a nominal horizon to set $\lambda_D$ and $\rho$ does not change the controlled-departure guarantee because neither
constant enters the e-process or its authorization threshold.
The nominal horizon may change behavior after authorization, and that alternative was not evaluated here.
Only $\alphaE$ is an error probability.

Algorithm~\ref{alg:eats} is the fixed specification evaluated in
Section~\ref{sec:experiments}.
Before authorization, \method{} follows \anchor{} exactly, while afterward the permanent latch permits adaptation and
the relevance score controls how strongly the short-memory posterior affects arm selection.

\section{Proof of controlled departure}
\label{app:proof}

Theorem~\ref{thm:main} combines a statistical result with an algorithmic one.
We first show that each arm's evidence remains a martingale when the policy chooses arms.
We then bound the probability that any arm becomes authorized.
Finally, we show that \method{}'s trajectory cannot depart from \anchor{}'s trajectory before such an authorization.

\paragraph{Why is the arm-specific evidence a martingale?}
For arm $k$, let
\[
\mathcal H_{k,n}
=
\sigma(X_{k,1},\ldots,X_{k,n})
\]
contain its first $n$ observed rewards.
Under the prior-predictive stationary model in Equation~\eqref{eq:null}, $m(X_{k,1:n})$ is the probability assigned to this reward sequence by one unknown Bernoulli probability.
As described in Section~\ref{sec:method}, candidate change-point $s_j$ marks a possible boundary between two periods in arm $k$'s reward sequence.
Rewards before $s_j$ use one constant Bernoulli probability, while rewards from $s_j$ onward use a second probability that may differ.
Considering several candidate change-points allows the evidence process to look for departures from stationarity without knowing when a change occurred.
The numerator of Equation~\eqref{eq:ecomponent} assigns independent $\mathrm{Beta}(1,1)$ priors to the probabilities before and after
$s_j$.

For arm $k$ after its first $n$ rewards have been observed, $L_{k,n}^{(j)}$ is the likelihood ratio comparing change-point candidate $j$, which places a change at arm-local pull $s_j$, with the stationary model over those same $n$ rewards.
Before pull $s_j$, the proposed change has not occurred, so the two models assign the same probability and $L_{k,n}^{(j)}=1$.
Under the stationary model,
\[
\E\!\left[
L_{k,n}^{(j)}
\mid
\mathcal H_{k,n-1}
\right]
=
L_{k,n-1}^{(j)}.
\]
Therefore, each candidate's evidence is a nonnegative martingale starting at
one.

Candidates whose proposed change-points have not yet been reached remain at
one.
Equation~\eqref{eq:monitor} can consequently be written as
\[
E_{k,n}
=
\sum_{j\geq1}\pi_jL_{k,n}^{(j)}.
\]
Here, $\pi_j=1/[j(j+1)]$ is the weight assigned to candidate $j$ and
determines how much candidate change-point $s_j$ contributes to $E_{k,n}$.
The sequence $(\pi_j)_{j\geq1}$ is nonnegative and sums to one.
Since every term is nonnegative, conditional monotone convergence gives
\[
\E\!\left[E_{k,n}\mid\mathcal H_{k,n-1}\right]
=
\sum_{j\geq1}\pi_j
\E\!\left[
L_{k,n}^{(j)}
\mid
\mathcal H_{k,n-1}
\right]
=
E_{k,n-1}.
\]
Hence, $E_{k,n}$ is also a nonnegative martingale starting at one.
The final term $1/(J(n)+1)$ in Equation~\eqref{eq:monitor} is exactly the combined
weight of the candidate change-points not yet reached.

\paragraph{Why does adaptive arm selection preserve this property?}
Fix an arm $k$.
Let $A_t$ denote the arm selected by the policy in calendar round $t$.
So, the evidence for arm $k$ is updated in that round only when $A_t=k$.

Let $U_t$ denote all posterior samples drawn while the policy scores the arms
in round $t$.
This includes one long-history posterior sample for every arm and a short-memory posterior sample for each arm with $w_{k,t}>0$.
Conditional on $\F_{t-1}$, these samples are generated using fresh policy
randomness and reveal no information about any arm's next unrevealed reward.

Define
\[
\F_{t-1}
=
\sigma(U_1,A_1,Y_1,\ldots,U_{t-1},A_{t-1},Y_{t-1}),
\]
where $\F_0$ represents the empty history before round one.
Therefore, $\F_{t-1}$ contains everything revealed through the end of round $t-1$.
After the samples in $U_t$ are drawn and arm $A_t$ is selected, but before its
reward is observed, the available information is
\[
\mathcal G_t
=
\F_{t-1}\vee\sigma(U_t,A_t).
\]
The selected arm then produces reward
\[
Y_t=X_{A_t,N_{A_t}(t)},
\]
where $N_{A_t}(t)$ is the number of times that arm has been pulled through
round $t$.
After this reward is observed, the available information is
\[
\F_t
=
\mathcal G_t\vee\sigma(Y_t).
\]
Therefore, $A_t$ is selected using only information available before $Y_t$ is
revealed.

For each arm, define its calendar-time evidence by
\[
\widetilde E_{k,t}=E_{k,N_k(t)}.
\]
Its update can therefore be written as
\[
\widetilde E_{k,t}
=
\widetilde E_{k,t-1}
+
\1\{A_t=k\}
\left(
E_{k,N_k(t-1)+1}-E_{k,N_k(t-1)}
\right).
\]
If arm $k$ is not selected, its evidence does not change.
On the other hand, if it is selected, the next reward has the prior-predictive distribution determined by that arm's observed history.
The histories of the other arms and the independent policy randomness reveal no additional information about this
unseen reward.
Therefore,
\[
\E\!\left[
\widetilde E_{k,t}
\mid
\mathcal G_t
\right]
=
\widetilde E_{k,t-1}.
\]
Taking the conditional expectation again given $\F_{t-1}$ shows that
$\widetilde E_{k,t}$ remains a nonnegative martingale in calendar time.

Ville's inequality now gives, for each arm,
\[
\Pp\!\left(
\max_{t\leq T}\widetilde E_{k,t}
\geq
K/\alphaE
\right)
\leq
\frac{\alphaE}{K}.
\]
A union bound over the $K$ arms then gives
\[
\Pp\!\left(
\max_{k\leq K}\max_{t\leq T}\widetilde E_{k,t}
\geq
K/\alphaE
\right)
\leq
\sum_{k=1}^K\frac{\alphaE}{K}
=
\alphaE.
\]

\paragraph{Why can departure occur only after authorization?}
This final step is pathwise and does not require the stationary model.
We couple \method{} and \anchor{} from identical initial states using the same
arm-specific reward sequences, policy-randomness stream, and tie-breaking rule.

Before $\tauth$, every authorization latch is zero.
Equation~\eqref{eq:weight} then gives $g_{k,t}=w_{k,t}=0$ for every arm.
By Equation~\eqref{eq:score}, \method{} draws no short-memory samples and computes
exactly the same scores as \anchor{}.
The two policies therefore select the same arm, observe the same reward, and make the same long-posterior update.
Repeating this argument across rounds gives identical actions, rewards, and long-posterior states through every round before $\tauth$.

At round $\tauth$, the authorization latch is active, yet the policies need not differ in their trajectories.
It may be that the relevance score is still zero, or the modified scores have the same maximizing arm.
Hence, pathwise,
\[
\tdep\geq\tauth.
\]

An authorization latch can become active only after an arm's evidence has
crossed $K/\alphaE$ on a completed update.
Taking all this together,
\[
\{\tdep\leq T\}
\subseteq
\{\tauth\leq T\}
\subseteq
\left\{
\max_{k\leq K}\max_{t\leq T}\widetilde E_{k,t}
\geq K/\alphaE
\right\}.
\]
Combining this pathwise inclusion with the preceding probability bound gives
\[
\Pp(\tdep\leq T)
\leq
\Pp(\tauth\leq T)
\leq
\alphaE,
\]
which proves Theorem~\ref{thm:main}. \qed

\paragraph{What does the departure bound transfer?}
Let $H_T^e$ and $H_T^a$ denote the coupled histories of \method{} and
\anchor{}, including their actions, observed rewards, and shared latent means.
Let $L(H_T)\in[0,1]$ be any measurable function of a policy trajectory,
such as normalized dynamic pseudo-regret.
Applying the same function to the coupled \method{} and \anchor{} trajectories
gives
\[
\left|\E_0 L(H_T^e)-\E_0 L(H_T^a)\right|
\leq\Pp_0(\tdep\leq T)\leq\alphaE.
\]
This includes normalized dynamic pseudo-regret under the declared model.
To clarify, this bound may be loose and provides no guarantee after
conditioning on departure or outside the declared model.

\section{Evidence strength and authorization}
\label{app:evidence}

For each arm $k\in\{1,\ldots,K\}$, Section~\ref{sec:method} defines the
candidate-specific evidence $L_{k,n}^{(j)}$ and the arm-specific e-process
$E_{k,n}$ after $n$ pulls.
This appendix answers three questions.
How much evidence does one candidate provide?
How many observations can be sufficient for authorization after a change?
Does the authorization latch materially affect the controller?
All observation counts in this section are local to one arm.

\paragraph{How much evidence does one candidate provide?}
Recall that $j$ indexes the candidate change-points.
Candidate $j$ models the rewards through arm-local pull $s_j-1$ using one constant Bernoulli success
probability and the rewards from pull $s_j$ onward using a second probability that may be different from the first.
So, $s_j$ places the proposed change between pulls $s_j-1$ and $s_j$.
For a candidate with $s_j\leq n$, let
\[
 n_0=s_j-1,
 \qquad
 n_1=n-n_0
\]
be the segment lengths, and let $S_0$ and $S_1$ be their success counts.
Their empirical means and pooled mean are
\[
 \widehat p_0=\frac{S_0}{n_0},
 \qquad
 \widehat p_1=\frac{S_1}{n_1},
 \qquad
 \widehat p=\frac{S_0+S_1}{n}.
\]
Define
\[
 I_{j,n}
 =
 n_0\kl(\widehat p_0\Vert\widehat p)
 +
 n_1\kl(\widehat p_1\Vert\widehat p),
\]
where
\[
 \kl(p\Vert q)
 =
 p\log\frac pq
 +(1-p)\log\frac{1-p}{1-q}
\]
is the Kullback--Leibler divergence between Bernoulli distributions, with the
usual boundary conventions.
The quantity $I_{j,n}$ is the gain in maximized log likelihood from fitting one Bernoulli success probability to rewards
$X_{k,1:s_j-1}$ and another probability to rewards $X_{k,s_j:n}$.
This is instead of fitting one common probability to all $n$ observed rewards.

Before stating the result, recall that
\[
\pi_j=\frac{1}{j(j+1)}
\]
is the weight assigned to candidate change-point $j$.
The user-chosen level $\alphaE$ determines the authorization threshold $K/\alphaE$.

For the capacity statement, fix candidate $j$ and let
\[
\ell=s_j-1
\]
be the number of rewards observed before its proposed change-point.
Suppose these $\ell$ rewards contain $S$ successes and $\ell-S$ failures.
As the arm continues to be sampled, suppose the empirical success rate of its
rewards from pull $s_j$ onward converges to $\theta_1\in(0,1)$.
Thus, $\theta_1$ is the limiting success rate after the candidate change-point.
We will continue to use $\mathrm{B}$ for the beta function.

\begin{theorem}[Evidence lower bound and candidate capacity]
\label{thm:evidence-budget}
For every candidate $s_j\leq n$,
\begin{equation}
 \log E_{k,n}
 \geq
 I_{j,n}
 -\log((n_0+1)(n_1+1))
 -\log(j(j+1)).
 \label{eq:information-budget}
\end{equation}
For the fixed candidate described above, if the arm continues to be sampled,
then
\begin{equation}
 L_{k,\ell+m}^{(j)}
 \longrightarrow
 \frac{\mathrm{B}(S+1,\ell-S+1)}
 {\theta_1^S(1-\theta_1)^{\ell-S}}
 \qquad\text{as }m\to\infty.
 \label{eq:component-capacity}
\end{equation}
We call the limiting weighted evidence of candidate $j$, scaled by
$\alphaE$, its capacity:
\[
 \operatorname{Cap}_j
 =
 \alphaE\pi_j
 \frac{\mathrm{B}(S+1,\ell-S+1)}
 {\theta_1^S(1-\theta_1)^{\ell-S}}.
\]
If $\operatorname{Cap}_j>K$, candidate $j$ eventually authorizes the arm.
\end{theorem}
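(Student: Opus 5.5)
The plan has three parts, one for each claim in Theorem~\ref{thm:evidence-budget}: the log-evidence lower bound, the limit of a single candidate's likelihood ratio, and the capacity criterion.

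\textbf{Lower bound \eqref{eq:information-budget}.} Since every term in \eqref{eq:monitor} is nonnegative, we may drop all but candidate $j$, giving $E_{k,n}\ge\pi_jL^{(j)}_{k,n}$. Taking logs produces the $-\log(j(j+1))$ term. It remains to bound $\log L^{(j)}_{k,n}$, which needs two elementary inequalities for the Beta--Bernoulli marginal of a block of length $m$ with $S$ successes.

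For the numerator we need a lower bound. Write
\[
\mathrm B(S+1,m-S+1)=\frac{1}{(m+1)\binom mS}.
\]
Use the standard bound $\binom mS\le \widehat q^{-S}(1-\widehat q)^{-(m-S)}$ with $\widehat q=S/m$. This follows because $\binom mS\widehat q^S(1-\widehat q)^{m-S}$ is a binomial probability and so is at most one. The result is
\[
m(X)\ge \frac{\widehat q^{S}(1-\widehat q)^{m-S}}{m+1},
\]
that is, the maximized likelihood divided by $m+1$.

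For the denominator we need an upper bound. The marginal is an average of likelihoods, so $m(X_{k,1:n})\le \sup_q q^{S}(1-q)^{n-S}=\widehat p^{S}(1-\widehat p)^{n-S}$.

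Combining the two, $\log L^{(j)}_{k,n}$ is at least the log maximized two-segment likelihood, minus the log maximized pooled likelihood, minus $\log((n_0+1)(n_1+1))$. The difference of log maximized likelihoods equals $I_{j,n}$. To confirm this, expand
\[
n_0\kl(\widehat p_0\Vert\widehat p)=S_0\log(\widehat p_0/\widehat p)+(n_0-S_0)\log\frac{1-\widehat p_0}{1-\widehat p},
\]
do the same for the second segment, and sum. Boundary cases ($\widehat q\in\{0,1\}$) are handled by the usual $0\log0=0$ convention.

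\textbf{Limit \eqref{eq:component-capacity}.} Write $n=\ell+m$. The first block's marginal is the fixed number $\mathrm B(S+1,\ell-S+1)$, so the task is the ratio $\mathrm B(S_1+1,F_1+1)/\mathrm B(S+S_1+1,\ell-S+F_1+1)$.

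Interpret this ratio as a posterior expectation: it equals $1/\E[q^S(1-q)^{\ell-S}]$, where $q\sim\mathrm{Beta}(S_1+1,F_1+1)$. As $m\to\infty$ with $S_1/m\to\theta_1\in(0,1)$, this Beta law concentrates at $\theta_1$, because its mean converges to $\theta_1$ and its variance is $O(1/m)$. The function $q\mapsto q^S(1-q)^{\ell-S}$ is bounded and continuous on $[0,1]$, so the expectation converges to $\theta_1^S(1-\theta_1)^{\ell-S}$ by weak convergence. This yields \eqref{eq:component-capacity}, and the limit is finite and positive because $\theta_1\in(0,1)$.

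\textbf{Capacity criterion.} Since $E_{k,n}\ge\pi_jL^{(j)}_{k,n}$ and $\pi_jL^{(j)}_{k,n}\to\operatorname{Cap}_j/\alphaE$, the assumption $\operatorname{Cap}_j>K$ means the limit exceeds $K/\alphaE$ strictly. Hence $E_{k,n}\ge K/\alphaE$ for all large $n$, so the latch eventually sets.

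\textbf{Main obstacle.} The delicate step is the limit. One must justify the weak-convergence argument assuming only that the empirical rate converges, not that the rewards are i.i.d., and one must keep the boundary conventions of $\kl$ consistent with the marginal-likelihood bounds in the first part. I would handle the limit through the explicit posterior-expectation representation above, which avoids any Stirling asymptotics.
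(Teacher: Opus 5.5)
Your proposal is correct. The lower bound follows the paper's argument in substance. The paper defines the redundancy $\operatorname{red}(h,s)=\log[\widehat p^{\,s}(1-\widehat p)^{h-s}/\mathrm B(s+1,h-s+1)]$ and writes the exact identity $\log L^{(j)}_{k,n}=I_{j,n}-\operatorname{red}(n_0,S_0)-\operatorname{red}(n_1,S_1)+\operatorname{red}(n,S)$. It then applies $0\le\operatorname{red}(h,s)\le\log(h+1)$, and these two bounds are exactly your ``marginal $\le$ maximized likelihood'' and ``binomial probability $\le 1$'' inequalities. Your derivation of the lower half via average $\le$ supremum is more transparent than the paper's, which relies implicitly on $s$ being a mode of $\mathrm{Bin}(h,s/h)$.

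The genuine difference is the limit. The paper expands the Beta ratio into Gamma functions and applies $\Gamma(x+a)/\Gamma(x+b)\sim x^{a-b}$. That step needs $\theta_1\in(0,1)$ so that both $R_m$ and $m-R_m$ diverge. You instead write the ratio as $1/\E[q^S(1-q)^{\ell-S}]$ with $q\sim\mathrm{Beta}(S_1+1,m-S_1+1)$, and you use only the Beta mean and an $O(1/m)$ variance bound.

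Your route is more elementary and visibly pathwise. The representation is a deterministic function of the counts, and only $S_1/m\to\theta_1$ enters, so the non-i.i.d. ``obstacle'' you flag does not actually arise. It also extends to $\theta_1\in\{0,1\}$: there the limit is $+\infty$ whenever the prefix contains an outcome whose post-change frequency tends to zero.

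What the paper's exact redundancy identity buys is reuse. Right after the theorem it yields $\log\operatorname{Cap}_j=\ell_j\kl(S_j/\ell_j\Vert\theta_1)-\operatorname{red}(\ell_j,S_j)+\log\alphaE-\log(j(j+1))$, and hence the growth rate $\ell_j^{-1}\log\operatorname{Cap}_j\to\kl(\theta_0\Vert\theta_1)$. Your inequality-only version would have to rederive that identity for this follow-up, though the theorem itself does not need it.
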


In Equation~\eqref{eq:component-capacity}, the powers $S$ and $\ell-S$ are the
numbers of successes and failures observed before the proposed change.
As a consequence,
$\theta_1^S(1-\theta_1)^{\ell-S}$ is the likelihood that a Bernoulli model with success
probability $\theta_1$ assigns to the observed pre-change sequence with $S$ successes and $\ell-S$ failures.
The quantity $\operatorname{Cap}_j$ includes the candidate weight
$\pi_j$ and the authorization level $\alphaE$.
So, the condition $\operatorname{Cap}_j>K$ is equivalent to the limiting contribution of
candidate $j$ exceeding the authorization threshold $K/\alphaE$.

\paragraph{Why does the bound hold?}
For a consecutive block of $h\geq1$ binary rewards from one arm, let $s$ be
the number of successes and let $\widehat p=s/h$ be the empirical success
rate.
The likelihood of the observed block is maximized at $\widehat p$, giving
\[
\widehat p^s(1-\widehat p)^{h-s}.
\]
Under a $\mathrm{Beta}(1,1)$ prior, averaging the same likelihood over the
unknown success probability gives
\[
\mathrm{B}(s+1,h-s+1)
=
\int_0^1 p^s(1-p)^{h-s}\,dp.
\]
We call the difference between these two log likelihoods the redundancy:
\begin{equation}
 \operatorname{red}(h,s)
 =
 \log
 \frac{\widehat p^s(1-\widehat p)^{h-s}}
 {\mathrm{B}(s+1,h-s+1)}.
 \label{eq:redundancy}
\end{equation}
Redundancy is the cost of averaging over the unknown success probability rather than
fitting that probability after observing the block.
The endpoint cases $s=0$ and $s=h$ are interpreted by continuity, using $0^0=1$.
Since
\[
 \exp\{\operatorname{red}(h,s)\}
 =
 (h+1)
 \Pr_{\widehat p}\{\operatorname{Bin}(h,\widehat p)=s\},
\]
we get
\begin{equation}
 0
 \le
 \operatorname{red}(h,s)
 \le
 \log(h+1).
 \label{eq:redundancy-bound}
\end{equation}

Writing $S=S_0+S_1$, direct substitution into
Equation~\eqref{eq:ecomponent} gives
\begin{equation}
 \log L_{k,n}^{(j)}
 =
 I_{j,n}
 -\operatorname{red}(n_0,S_0)
 -\operatorname{red}(n_1,S_1)
 +\operatorname{red}(n,S).
 \label{eq:exact-information}
\end{equation}
Since $E_{k,n}\ge\pi_jL_{k,n}^{(j)}$ and
$\pi_j=1/[j(j+1)]$, Equation~\eqref{eq:redundancy-bound} gives
Equation~\eqref{eq:information-budget}.

We next prove the limiting value in
Equation~\eqref{eq:component-capacity}.
After the fixed prefix of $\ell$ rewards, let $R_m$ be the number of successes among the next $m$ rewards.
Then
\[
 L_{k,\ell+m}^{(j)}
 =
 \mathrm B(S+1,\ell-S+1)
 \frac{
 \mathrm B(R_m+1,m-R_m+1)
 }{
 \mathrm B(S+R_m+1,\ell-S+m-R_m+1)
 }.
\]
The beta function satisfies
\[
\mathrm{B}(a,b)
=
\frac{\Gamma(a)\Gamma(b)}{\Gamma(a+b)},
\]
where $\Gamma$ is the gamma function and $\Gamma(q+1)=q!$ for every
nonnegative integer $q$.
If $R_m/m\to\theta_1$, substituting this identity
into the preceding expression and applying the asymptotic relation
\[
\frac{\Gamma(x+a)}{\Gamma(x+b)}
\sim
x^{a-b}
\qquad\text{as }x\to\infty
\]
gives Equation~\eqref{eq:component-capacity}.
When
$\operatorname{Cap}_j>K$, the limiting value of
$\pi_jL_{k,\ell+m}^{(j)}$ exceeds $K/\alphaE$, which forces authorization. \qed

So, a fixed candidate has a finite evidence capacity determined by the
rewards before its proposed change-point and the limiting success rate
afterward.
The condition $\operatorname{Cap}_j>K$ guarantees eventual
authorization when the arm continues to be sampled.
The converse does not hold because several candidates may combine to cross the threshold, and a
candidate may cross temporarily before settling below it.

\paragraph{How does capacity change with more observations before the proposed split?}
Consider a sequence of experiments indexed by $j$, where experiment $j$'s
change occurs after $\ell_j=s_j-1$ observations.
Suppose its prefix contains $S_j$ successes, $S_j/\ell_j\to\theta_0$, and its post-change empirical mean
converges to $\theta_1$, where $\theta_0,\theta_1\in(0,1)$.
From Equation~\eqref{eq:redundancy},
\[
\log\operatorname{Cap}_j
=
\ell_j\kl(S_j/\ell_j\Vert\theta_1)
-\operatorname{red}(\ell_j,S_j)
+\log\alphaE-\log(j(j+1)).
\]
Since $0\leq\operatorname{red}(\ell_j,S_j)\leq\log(\ell_j+1)$ and the grid
grows geometrically, $\log(j(j+1))/\ell_j\to0$.
Therefore,
\[
\frac{1}{\ell_j}\log\operatorname{Cap}_j
\longrightarrow
\kl(\theta_0\Vert\theta_1).
\]
When $\theta_0\neq\theta_1$, capacity therefore grows exponentially with
prefix length at rate $\kl(\theta_0\Vert\theta_1)$.
This characterizes the candidate's eventual evidence strength after continued sampling.
Before convergence, its evidence may temporarily exceed this limit.

\paragraph{Does the condition still hold when sampling depends on earlier rewards?}

The policy may use earlier rewards to decide when to pull an arm.
We therefore need a condition that remains valid when the observation
times depend on those earlier rewards.
The idea is the same as in Section~3 where a difference
between the two segments of one arm's observed reward sequence must be large enough to overcome sampling
noise and the evidence penalties.

Fix an arm $k$ and an allowed failure probability $\xi\in(0,1)$
before observing the rewards.
As in Appendix~\ref{app:proof}, $\mathcal G_t$ records all past arm
selections, observed rewards and policy randomization, together with
the current randomization and selected arm $A_t$, before reward $Y_t$
is observed.
Once reward $Y_t$ is observed, $\F_t$ contains all information in $\mathcal G_t$
as well as reward $Y_t$.
On rounds when $A_t=k$, let
\[
\mu_{k,t}=\Pp(Y_t=1\mid\mathcal G_t).
\]
So, $\mu_{k,t}$ is the reward's conditional success probability
given the information available before it is observed.
The policy does not need to know its value.
For arm $k$'s $i$th observed reward, let $\mu_i$ denote this probability
immediately before that reward is observed.

Let $n$ be the number of rewards observed from arm $k$. Candidate $j$ divides its observations
into the same two segments used above, with
$n_0=s_j-1$ observations before the candidate and
$n_1=n-s_j+1$ observations from the candidate onward.
To clarify, these are counts of observed rewards, not calendar rounds.
Define the difference between the two segments' average conditional
success probabilities by
\[
D_{j,n}
=
\left|
\frac{1}{n_0}\sum_{i=1}^{n_0}\mu_i
-
\frac{1}{n_1}\sum_{i=n_0+1}^{n}\mu_i
\right|.
\]
This is a difference in the reward means under the assumed model,
rather than the empirical difference calculated from the rewards.

To allow for sampling noise in both segments, let the margin be
\[
a_{j,n}
=
\sqrt{\frac{\log(4j(j+1)/\xi)}{2n_0}}
+
\sqrt{\frac{\log(4j(j+1)n_1(n_1+1)/\xi)}{2n_1}}.
\]
The first term bounds the error in the first segment's empirical mean, while
the second term does the same for the second segment.
The factor $j(j+1)$ accounts for checking every candidate location.
The additional factor $n_1(n_1+1)$ accounts for checking each
candidate again as more rewards are collected.
Therefore, one event of probability at least $1-\xi$ controls both errors
for every completed split.
On the event where both sampling errors satisfy their bounds,
the absolute difference between the two empirical reward means is at
least $\max\{D_{j,n}-a_{j,n},0\}$.

\begin{theorem}[Sufficient evidence under adaptive sampling]
\label{thm:adaptive-power}
For the fixed arm $k$, let $N_k(t)$ count the rewards observed
from that arm through round $t$.
Under the conditional reward law above, with probability at least
$1-\xi$, simultaneously for every candidate $j$ and every observation
count $s_j\leq n\leq N_k(T)$, the condition
\begin{equation}
\begin{split}
2\frac{n_0n_1}{n_0+n_1}(D_{j,n}-a_{j,n})_+^2
\geq{}&
\log(K/\alphaE)+\log(j(j+1))
\\
&+\log(n_0+1)+\log(n_1+1)
\end{split}
\label{eq:adaptive-power}
\end{equation}
implies $E_{k,n}\geq K/\alphaE$.
Consequently, arm $k$ is authorized by the end of the round in which
its $n$th reward is observed.
The probability includes both the rewards and the policy's randomization.
\end{theorem}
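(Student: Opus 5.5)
Three ingredients combine here: the deterministic evidence lower bound \eqref{eq:information-budget} from Theorem~\ref{thm:evidence-budget}; a Pinsker-type lower bound on $I_{j,n}$ in terms of the empirical gap $|\widehat p_0-\widehat p_1|$; and a uniform martingale concentration event, of probability at least $1-\xi$, under which that empirical gap is at least $(D_{j,n}-a_{j,n})_+$. On this event, \eqref{eq:adaptive-power} makes the right side of \eqref{eq:information-budget} at least $\log(K/\alphaE)$, so $E_{k,n}\ge K/\alphaE$. The latch $C^E_k$ is then set in the same round by Algorithm~\ref{alg:eats}.

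\textbf{Step 1 (information lower bound).} Let $\widehat p=(n_0\widehat p_0+n_1\widehat p_1)/n$. Pinsker's inequality for Bernoulli laws gives $\kl(p\Vert q)\ge 2(p-q)^2$. Moreover $\widehat p_0-\widehat p=\frac{n_1}{n}(\widehat p_0-\widehat p_1)$ and $\widehat p_1-\widehat p=\frac{n_0}{n}(\widehat p_1-\widehat p_0)$. Therefore
\[
I_{j,n}\ge 2n_0\frac{n_1^2}{n^2}\Delta^2+2n_1\frac{n_0^2}{n^2}\Delta^2
=2\frac{n_0n_1}{n_0+n_1}\Delta^2,
\qquad \Delta=|\widehat p_0-\widehat p_1|.
\]
This step is deterministic.

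\textbf{Step 2 (concentration under adaptive sampling).} Index arm $k$'s rewards by arm-local count $i$, with filtration $\mathcal H'_i$ generated by the sigma-field $\mathcal G_t$ at the round of the $i$th pull. Then $\xi_i=X_{k,i}-\mu_i$ is a bounded martingale difference sequence with values in an interval of length one. Optional sampling is legitimate because the round of the $(i+1)$th pull is a stopping time.

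For the first segment, $n_0=s_j-1$ is fixed for each $j$. Azuma--Hoeffding gives
\[
\Pp\Big(\big|\tfrac1{n_0}\textstyle\sum_{i\le n_0}\xi_i\big|\ge \epsilon\Big)\le 2e^{-2n_0\epsilon^2}.
\]
Choosing $\epsilon$ equal to the first term of $a_{j,n}$ makes this at most $\xi/(2j(j+1))$.

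For the second segment, fix $j$ and fix $n_1\ge1$, so the block is the deterministic arm-local index range $s_j,\dots,s_j+n_1-1$. Azuma again gives failure probability at most $\xi/(2j(j+1)n_1(n_1+1))$ with the second term of $a_{j,n}$. Summing over $n_1\ge1$ uses $\sum_{n_1\ge 1} 1/(n_1(n_1+1))=1$, and summing over $j$ uses $\sum_j 1/(j(j+1))=1$. Both families then fail with total probability at most $\xi/2+\xi/2=\xi$.

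The restriction $n\le N_k(T)$ only discards indices, so the event is uniform over all completed splits. It also covers randomness in the policy, since the $\mu_i$ are conditional on $\mathcal G_t$, which already includes that randomness. On this event the triangle inequality gives $\Delta\ge D_{j,n}-a_{j,n}$, hence $\Delta^2\ge (D_{j,n}-a_{j,n})_+^2$.

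\textbf{Step 3 (combine).} Combining Steps~1 and~2 with \eqref{eq:information-budget}:
\[
\log E_{k,n}\ge 2\tfrac{n_0n_1}{n}(D_{j,n}-a_{j,n})_+^2-\log((n_0+1)(n_1+1))-\log(j(j+1)).
\]
Under \eqref{eq:adaptive-power} the right side is at least $\log(K/\alphaE)$.

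\textbf{Main obstacle.} The hard part is Step~2. The concern is whether Azuma applies to a block whose endpoint is indexed by arm-local count while pull times are random and possibly finite in number. My plan is to extend the reward sequence fictitiously beyond $N_k(T)$, for example by appending independent Bernoulli$(1/2)$ draws, so that every fixed-index block is defined. Events involving only completed pulls are unaffected by the extension. I would also check that the union over $n_1$ with weights $1/(n_1(n_1+1))$ matches the factor in $a_{j,n}$ exactly.
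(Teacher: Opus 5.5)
Your proposal is correct and follows the paper's proof essentially step for step. It uses the same block-wise concentration with failure weights $\xi/[2j(j+1)]$ and $\xi/[2j(j+1)n_1(n_1+1)]$, the triangle inequality giving $|\widehat p_0-\widehat p_1|\ge(D_{j,n}-a_{j,n})_+$, Pinsker, and Equation~\eqref{eq:information-budget}.

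The only difference is technical. You run Azuma--Hoeffding in arm-local time and pad with fictitious Bernoulli$(1/2)$ draws. The paper instead uses the calendar-time supermartingale $\exp\{\lambda Z_t-\lambda^2V_t/8\}$ with block indicators $I_t$ and Ville's inequality. That bounds directly the event that the block completes with a large deviation, giving the same $2\exp(-2ha^2)$ bound without any extension.
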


\begin{proof}
First, we bound sampling noise for a block of observations even when
the policy chooses when to collect them.
Fix integers $u\geq0$ and $h\geq1$.
The block consists of arm $k$'s next $h$ rewards after its first $u$
observations, namely observations $u+1,\ldots,u+h$.
Let
\[
I_t=\1\{A_t=k,\ u\leq N_k(t-1)<u+h\}
\]
indicate whether round $t$ contributes a reward to this block.
This is determined before $Y_t$ is observed.
Write
\[
Z_t=\sum_{r=1}^{t}I_r(Y_r-\mu_{k,r}),
\qquad
V_t=\sum_{r=1}^{t}I_r
\]
for the block's accumulated deviation from its conditional means
and its number of observations through round $t$.
For these sums, set $\mu_{k,r}=0$ when $A_r\ne k$; those rounds
have $I_r=0$ and contribute nothing.
Once the block is complete, $V_t=h$ and both sums stop changing.

For any fixed $\lambda\in\mathbb R$, conditional Hoeffding's lemma
gives
\[
\E\!\left[
\exp\{\lambda I_t(Y_t-\mu_{k,t})-\lambda^2I_t/8\}
\mid\mathcal G_t
\right]\leq1.
\]
So, a selected reward lies in $[0,1]$ and has conditional mean
$\mu_{k,t}$.
When $I_t=0$, the exponential equals one.
It follows, by taking conditional expectation again given $\F_{t-1}$,
that
\[
\mathcal M_t(\lambda)
=\exp\{\lambda Z_t-\lambda^2V_t/8\}
\]
is a nonnegative supermartingale starting at one.
This is the standard argument for bounded observations
\citep{howard2020timeuniform}.

If the block completes with $Z_t\geq ha$ for some fixed $a>0$,
then $\mathcal M_t(4a)\geq\exp(2ha^2)$.
Ville's inequality bounds the probability of this event by
$\exp(-2ha^2)$.
Using $\lambda=-4a$ gives the same bound for $Z_t\leq-ha$.
Therefore,
\begin{equation}
\Pp\!\left(\exists t\leq T:\ V_t=h,\ |Z_t|\geq ha\right)
\leq2\exp(-2ha^2).
\label{eq:adaptive-block-bound}
\end{equation}
This bounds the joint event that the block completes and its average
deviation is large.
If the policy never finishes the block, the event does not occur.

Apply this bound to each candidate's first segment with $u=0$ and
$h=n_0$, using the first term of $a_{j,n}$ as $a$.
The failure probability is at most $\xi/[2j(j+1)]$.
For its second segment, take $u=n_0$ and $h=n_1$, using the second
term as $a$.
The failure probability is at most
$\xi/[2j(j+1)n_1(n_1+1)]$.
For a fixed candidate, the first segment always contains the same observations.
So, its failure probability is counted only once in the union bound,
even as additional observations extend the second segment.
Since $\sum_{j\geq1}1/[j(j+1)]=1$, and the same identity holds
when summing over $n_1\geq1$, a union bound gives total failure
probability at most
\[
\sum_{j\geq1}\frac{\xi}{2j(j+1)}
+
\sum_{j\geq1}\sum_{n_1\geq1}
\frac{\xi}{2j(j+1)n_1(n_1+1)}
=\xi.
\]

With probability at least $1-\xi$, both empirical means satisfy their
stated error bounds at every completed split.
We now show that whenever all these bounds hold, the proposed condition
implies authorization.
Writing these empirical means as $\widehat p_0$ and $\widehat p_1$,
the triangle inequality gives
\[
|\widehat p_0-\widehat p_1|
\geq(D_{j,n}-a_{j,n})_+.
\]
Let $\widehat p=(n_0\widehat p_0+n_1\widehat p_1)/(n_0+n_1)$
be their pooled mean.
Pinsker's inequality then gives us
\begin{align*}
I_{j,n}
&=n_0\kl(\widehat p_0\Vert\widehat p)
+n_1\kl(\widehat p_1\Vert\widehat p)
\\
&\geq2n_0(\widehat p_0-\widehat p)^2
+2n_1(\widehat p_1-\widehat p)^2
\\
&=2\frac{n_0n_1}{n_0+n_1}(\widehat p_0-\widehat p_1)^2
\\
&\geq2\frac{n_0n_1}{n_0+n_1}(D_{j,n}-a_{j,n})_+^2.
\end{align*}
Finally, Equation~\eqref{eq:information-budget} yields
\[
\log E_{k,n}
\geq I_{j,n}-\log(n_0+1)-\log(n_1+1)-\log(j(j+1)).
\]
If Equation~\eqref{eq:adaptive-power} holds, the preceding
inequalities give
\[
\log E_{k,n}\geq\log(K/\alphaE),
\qquad\text{and therefore}\qquad
E_{k,n}\geq K/\alphaE.
\]
By the definition of the authorization latch, arm $k$ is therefore
authorized by the end of the round in which its $n$th reward
is observed.

The union bound shows that, with probability at least $1-\xi$,
all the sampling-error bounds used above hold together for every
candidate and every observation count completed by round $T$.
Whenever these bounds hold, Equation~\eqref{eq:adaptive-power}
implies authorization for every candidate and completed observation
count that satisfies it.
This proves the simultaneous guarantee.
\end{proof}

\paragraph{What does this condition guarantee?}

The theorem gives a condition under which one arm's observed rewards
are enough for authorization.
For the theorem, only the means in $D_{j,n}$ are used for the analysis.
The policy computes its evidence directly from the observed rewards.
Since \method{} may not pull the arm enough, the theorem does not
guarantee authorization or bound the number of rounds needed for authorization.

With probability at least $1-\xi$, every candidate split and completed observation
count of the fixed arm that satisfies the sufficient condition also satisfies
$E_{k,n}\ge K/\alpha_E$, implying authorization.
To give the same guarantee across all arms, we would divide $\xi$
among them.
To clarify, the separate parameter $\alphaE$ bounds the probability of leaving
the anchor under the stationary model in Theorem~\ref{thm:main}.

Theorem~\ref{thm:adaptive-power} is stated in arm-local observation time.
We next connect it to a change that occurs at a calendar round and falls
between candidate locations on the geometric grid.

Suppose arm $k$'s conditional mean changes from fixed $\theta_0$ to fixed
$\theta_1$ after calendar round $\tau$.
By then, the arm has been observed $\ell=N_k(\tau)\geq1$ times.
Let $q=s_j-1\leq\ell$ be the largest preceding candidate and write $d=\ell-q$.
After $m$ post-change observations, the candidate's two segments have
lengths $n_0=q$ and $n_1=d+m$, with $n=q+d+m$.
Therefore,
\[
D_{j,n}
=
\frac{m}{d+m}|\theta_1-\theta_0|.
\]
The $d$ pre-change observations in the second segment dilute the visible
change by the factor $m/(d+m)$.
The actual change occurs between arm-local observations $\ell$ and
$\ell+1$.
Since $s_j=q+1$ is the last grid point at or before this
change, the next grid point satisfies $s_{j+1}>\ell+1$.
Using the grid recurrence $s_{j+1}=\lceil1.25s_j\rceil$,
\[
s_{j+1}
=
\lceil1.25(q+1)\rceil
>
\ell+1
\]
implies
\[
1.25(q+1)>\ell+1,
\qquad\text{and hence}\qquad
5q>4\ell-1.
\]
Since $q$ and $\ell$ are integers, $5q\geq4\ell$. Therefore,
\[
q\geq\frac{4\ell}{5},
\qquad
d=\ell-q\leq\left\lfloor\frac q4\right\rfloor.
\]
So, at most $q/4$ pre-change observations fall in the candidate's
second segment.
Because Theorem~\ref{thm:adaptive-power} holds simultaneously over all
completed observation counts, the random value of $\ell$ created by adaptive
sampling does not require conditioning.

The resulting certificate is rather conservative.
For $K=4$, $\alphaE=\xi=0.05$, and a change of size $0.5$, the
preceding candidate first satisfies \eqref{eq:adaptive-power} after
$m=479$ post-change observations when $\ell=1000$ and after $m=755$
when $\ell=300$.
When $\ell=100$, it never holds because its left side is
at most $13.3864$, below the right side's minimum of $16.7283$.
These are thresholds for a sufficient condition, but not observed detection delays.
Note that authorization may occur even when the condition fails, and the policy may
not collect enough observations for the condition to apply.

\paragraph{When can we use a sharper condition?}

When the candidate and both segment lengths are fixed before observing
rewards, and the change occurs at that candidate, we need to control sampling
error for only two fixed segments.
This therefore yields a sharper sufficient condition.

\begin{theorem}[Finite-sample authorization after an aligned change]
\label{thm:finite-power}
Fix candidate $j$ and an integer $m\geq1$ before observing the rewards,
and let $\ell=s_j-1$.
Suppose $X_{k,1},\ldots,X_{k,\ell+m}$ are independent, with the first
$\ell$ rewards distributed as $\mathrm{Bernoulli}(\theta_0)$ and
the next $m$ as $\mathrm{Bernoulli}(\theta_1)$.
For $\xi\in(0,1)$, define
\[
 a_\ell
 =
 \sqrt{\frac{\log(4/\xi)}{2\ell}},
 \qquad
 a_m
 =
 \sqrt{\frac{\log(4/\xi)}{2m}},
 \qquad
 \Delta
 =
 |\theta_1-\theta_0|.
\]
If
\begin{equation}
 2\frac{\ell m}{\ell+m}
 (\Delta-a_\ell-a_m)_+^2
 \ge
 \log(K/\alphaE)
 +\log(j(j+1))
 +\log(\ell+1)
 +\log(m+1),
 \label{eq:finite-power}
\end{equation}
where $(x)_+=\max\{x,0\}$, then
\[
 \Pr_{\theta_0\to\theta_1}
 \left\{
 E_{k,\ell+m}\ge K/\alphaE
 \right\}
 \ge
 1-\xi.
\]
Under the same condition, for any policy that chooses which arm
to pull using only past observations and its own randomization,
and each $t\leq T$,
\[
\Pr_{\theta_0\to\theta_1}
\left\{
N_k(t)\geq\ell+m,\ C^E_{k,t}=0
\right\}
\leq\xi.
\]
\end{theorem}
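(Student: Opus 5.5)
The plan is to reuse the deterministic chain from the proof of Theorem~\ref{thm:adaptive-power}, but with a cheaper concentration step: only two fixed segments need to be controlled, so no union over candidates or growing segment lengths is required. Write $\widehat p_0=S_0/\ell$ for the empirical mean of $X_{k,1:\ell}$ and $\widehat p_1=S_1/m$ for that of $X_{k,\ell+1:\ell+m}$. These are averages of independent Bernoulli variables with fixed sample sizes. Two-sided Hoeffding therefore gives $\Pr(|\widehat p_0-\theta_0|\ge a_\ell)\le 2\exp(-2\ell a_\ell^2)=\xi/2$, and likewise $\Pr(|\widehat p_1-\theta_1|\ge a_m)\le\xi/2$. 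A union bound produces an event $\mathcal E$ with $\Pr(\mathcal E)\ge1-\xi$ on which both deviations are below their margins. The triangle inequality then gives $|\widehat p_0-\widehat p_1|\ge(\Delta-a_\ell-a_m)_+$.

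On $\mathcal E$, we apply Theorem~\ref{thm:evidence-budget} to candidate $j$ at $n=\ell+m$, which has segment lengths $n_0=\ell$ and $n_1=m$:
\[
\log E_{k,\ell+m}\ge I_{j,\ell+m}-\log((\ell+1)(m+1))-\log(j(j+1)).
\]
Exactly as in the proof of Theorem~\ref{thm:adaptive-power}, Pinsker's inequality together with the pooled-mean identity gives
\[
I_{j,\ell+m}\ge 2\frac{\ell m}{\ell+m}(\widehat p_0-\widehat p_1)^2\ge 2\frac{\ell m}{\ell+m}(\Delta-a_\ell-a_m)_+^2.
\]
Under \eqref{eq:finite-power}, this yields $\log E_{k,\ell+m}\ge\log(K/\alphaE)$, which proves the first claim.

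For the policy statement, the main obstacle is that under adaptive sampling the arm's first $\ell+m$ rewards are no longer a fixed sample. I would handle this with the arm-specific reward sequence model already used in the coupling of Theorem~\ref{thm:main}. The rewards $X_{k,1},X_{k,2},\dots$ form a fixed stack generated independently of the policy's randomness, and the $n$th pull of arm $k$ reveals $X_{k,n}$. The hypothesis that the first $\ell$ are $\mathrm{Bernoulli}(\theta_0)$ and the next $m$ are $\mathrm{Bernoulli}(\theta_1)$ is a statement about this stack, so the event $\mathcal E$ depends only on $X_{k,1:\ell+m}$. Moreover, $E_{k,\ell+m}$ is a deterministic function of $X_{k,1:\ell+m}$, whatever the policy does. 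On the event $\{N_k(t)\ge\ell+m\}$, the $(\ell+m)$th reward has been observed by round $t$, and the latch satisfies $C^E_{k,t}\ge\1\{E_{k,\ell+m}\ge K/\alphaE\}$. It follows that
\[
\{N_k(t)\ge\ell+m,\ C^E_{k,t}=0\}\subseteq\{E_{k,\ell+m}<K/\alphaE\}\subseteq\mathcal E^c,
\]
whose probability is at most $\xi$.

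The only delicate point is justifying the stack representation: that the policy's choice, using past observations and its own randomness, does not alter the law of $X_{k,1:\ell+m}$. This is the sense in which the change is "aligned" at arm-local pull $s_j$. If instead the model specifies the law only through conditional means, as in Theorem~\ref{thm:adaptive-power}, I would fall back on that theorem's completed-block supermartingale bound \eqref{eq:adaptive-block-bound}, applied once to each of the two fixed blocks $(u,h)=(0,\ell)$ and $(\ell,m)$. Each block then fails with probability at most $\xi/2$ given the margins $a_\ell$ and $a_m$, and the event holds only when both blocks complete, which is exactly the case $N_k(t)\ge\ell+m$.
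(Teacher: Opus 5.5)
Your proposal is correct and follows essentially the same route as the paper. Both arguments use two-segment Hoeffding with a union bound, then the triangle inequality, then Pinsker combined with the evidence lower bound at $n=\ell+m$, and finally the pathwise inclusion $\{N_k(t)\ge\ell+m,\,C^E_{k,t}=0\}\subseteq\{E_{k,\ell+m}<K/\alphaE\}$, whose right side depends only on the fixed first $\ell+m$ entries of the arm's reward sequence. Your fallback via the completed-block bound \eqref{eq:adaptive-block-bound}, for a model specified only through conditional means, is a valid extra that the paper does not need.
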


\paragraph{Why does the condition work?}
The sufficient condition in Theorem~\ref{thm:finite-power}, stated in
Equation~\eqref{eq:finite-power}, works through three steps.
It first shows that the change remains visible despite sampling noise, then converts the
observed difference into evidence, and finally shows that this evidence is
large enough to cross the authorization threshold.
First, let $\widehat p_0$ and $\widehat p_1$ be the
empirical success rates before and after the change.
Hoeffding's inequality bounds how far each empirical rate can be from its true probability.
Applying it to both segments and taking a union bound gives, with probability at least $1-\xi$,
\[
|\widehat p_0-\theta_0|\leq a_\ell,
\qquad
|\widehat p_1-\theta_1|\leq a_m.
\]
Since the true probabilities differ by $\Delta$, these two bounds imply
\[
|\widehat p_1-\widehat p_0|
\geq
(\Delta-a_\ell-a_m)_+.
\]

Second, this observed difference must produce enough evidence against a common
reward probability.
Pinsker's inequality converts the difference between the two empirical rates into a lower bound
\[
I_{j,\ell+m}
\geq
2\frac{\ell m}{\ell+m}
|\widehat p_1-\widehat p_0|^2.
\]

As the final step, the evidence must remain above the authorization threshold after the
candidate and averaging costs are subtracted.
Equations \eqref{eq:finite-power} and \eqref{eq:information-budget} give
\[
\log E_{k,\ell+m}\geq\log(K/\alphaE),
\]
so $E_{k,\ell+m}\geq K/\alphaE$.
Since the preceding sampling bounds hold with probability at least
$1-\xi$, we get
\[
\Pr_{\theta_0\to\theta_1}
\left\{
E_{k,\ell+m}\geq K/\alphaE
\right\}
\geq1-\xi.
\]

To obtain the allocation rule, note that, pathwise,
\[
\left\{
N_k(t)\geq\ell+m,\ C^E_{k,t}=0
\right\}
\subseteq
\left\{
E_{k,\ell+m}<K/\alphaE
\right\}.
\]
Therefore, if the arm has completed observation $\ell+m$ and its
evidence crossed at that update, the permanent latch cannot still
be at zero.
The event on the right depends only on the fixed first $\ell+m$
entries of the arm's reward sequence.
Its probability is at most $\xi$, regardless of when the allocation
rule reveals those entries.
To clarify, this is not a bound conditional on the observed pull count.
This proves Theorem~\ref{thm:finite-power}. \qed

\paragraph{Does the authorization latch affect the controller?}
To isolate the latch's role, compare \method{} with a version that sets
$C^E_{k,t}=1$ for every arm and round while leaving the rest of the controller
unchanged.
Recall that $r_{k,n}$ measures recent disagreement with the
long-history model, while $w_{k,t}$ determines how strongly the short-memory
posterior influences the arm's score.

\begin{proposition}[Authorization changes the controller]
\label{prop:load-bearing}
Let $A_1$ be the arm selected in round one. For $T\geq2$, its first reward
produces $r_{A_1,1}>0$, whether that reward is zero or one. The comparison
without authorization therefore has $w_{A_1,2}>0$. In \method{}, no arm can
yet be authorized, so $w_{k,2}=0$ for every arm and the round-two scoring rule
is exactly that of \anchor{}.

More generally, consider an authorized arm $k$ in round $t$ and let
$n=N_k(t-1)$.
Holding its normalization scale $s^M_{k,n}$ and retained-data
factor $R_{k,t}$ fixed, the decision weight $w_{k,t}$ increases strictly with
the magnitude of its averaged prediction error, $|M_{k,n}|$, whenever
\[
\epsM<|M_{k,n}|<s^M_{k,n}.
\]
\end{proposition}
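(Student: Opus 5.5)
The first claim is a direct computation from the initial state. In round one every arm has long-history posterior $\mathrm{Beta}(1,1)$, so $\widehat\mu^L_{A_1,1}=1/2$ regardless of which arm is selected. With $M_{A_1,0}=Q_{A_1,0}=0$, Equation~\eqref{eq:mean} gives
\[
M_{A_1,1}=(1-\lambda_D)(X_{A_1,1}-1/2),
\]
so $|M_{A_1,1}|=(1-\lambda_D)/2$ whether the reward is zero or one. For $T\ge2$ we have $T^{-1/2}\le 2^{-1/2}<1$, hence $\lambda_D=\min\{0.999,1-T^{-1/2}\}\le 0.999$ and $1-\lambda_D\ge 0.001$. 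Thus $|M_{A_1,1}|\ge 5\cdot10^{-4}>\epsM$.

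The scale is $s^M_{A_1,1}=\sqrt{(Q/2)\log(K\pi^2/(3\kappaM))}$ with $Q=(1-\lambda_D)^2>0$. The logarithm is positive because $K\pi^2/(3\cdot0.05)>1$, so $s^M$ is finite and positive. Equation~\eqref{eq:ramp} then gives $r_{A_1,1}=\min\{1,|M|/s^M\}>0$.

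In the comparison policy the latch is identically one, so $g_{A_1,2}=r_{A_1,1}>0$. Equation~\eqref{eq:weight} gives $w_{A_1,2}\ge g^{\nu}>0$, since $\zeta\ge0$, $R\ge0$ and $1-g^\nu\ge0$.

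For \method{}, $C^E_{k,1}=1$ would require $E_{k,1}\ge K/\alphaE$. I will note that $J(1)=0$, since $s_1=2>1$. Hence $E_{k,1}=1<K/\alphaE$ because $K\ge1$ and $\alphaE<1$. Consequently all latches at $t-1=1$ are zero, $g_{k,2}=w_{k,2}=0$, and Equation~\eqref{eq:score} reduces to the \anchor{} score.

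For the monotonicity claim, fix an authorized arm, so $C^E=1$ and $g=r$. On $\epsM<|M|<s^M$ the indicator equals one and the cap is inactive, so $g=|M|/s^M\in(0,1)$ is strictly increasing in $|M|$. It therefore suffices to show that
\[
f(g)=g^\nu+\zeta(1-g^\nu)gR
\]
is strictly increasing on $(0,1)$ for the fixed $R\in[0,1)$. Differentiating gives
\[
f'(g)=\nu g^{\nu-1}+\zeta R\bigl(1-(1+\nu)g^{\nu}\bigr).
\]
The second term can be negative, and I expect this to be the main obstacle. I would bound it below by $-\zeta R\nu g^{\nu}$, using $1-(1+\nu)g^\nu\ge-\nu g^\nu$ for $g\le1$. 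This yields
\[
f'(g)\ge \nu g^{\nu-1}\bigl(1-\zeta R g\bigr)>0,
\]
because $\zeta=0.8$, $R<1$ and $g<1$ give $\zeta R g<1$. Composing the strictly increasing map $|M|\mapsto g$ with the strictly increasing $f$ proves the claim.
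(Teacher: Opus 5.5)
Your proposal is correct and follows the paper's proof essentially step for step: you compute $|M_{A_1,1}|=(1-\lambda_D)/2$ in round one and note $E_{k,1}=1$ because $s_1=2$, and for monotonicity you compose $|M|\mapsto|M|/s^M$ with $g^\nu+\zeta R g(1-g^\nu)$. Your lower bound $\nu g^{\nu-1}(1-\zeta Rg)$ is the paper's exact rewriting $\omega'(g)=\nu g^{\nu-1}(1-cg)+c(1-g^\nu)$, with $c=\zeta R$, after dropping the nonnegative second term.
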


\paragraph{Why does the difference appear after one reward?}
Let $d=1-\lambda_D$ be the weight placed on the newest prediction error.
The cap $\lambda_D\leq0.999$ ensures that $d\geq10^{-3}$.
Before its first reward, the selected arm $A_1$ has a long-history posterior mean of $1/2$.
Either possible reward therefore produces an error of magnitude $1/2$, giving
\[
|M_{A_1,1}|=\frac d2,
\qquad
Q_{A_1,1}=d^2,
\qquad
r_{A_1,1}
=
\frac{1}{
\sqrt{2\log(K\pi^2/(3\kappaM))}
}
>0.
\]
As defined in Appendix~\ref{app:algorithm}, $\kappaM=0.05$ is the fixed normalization constant used in
$s^M_{k,n}$.
It determines how the averaged prediction error is converted into relevance.

The first candidate change-point is $s_1=2$, so no candidate has been reached
after only one pull of the arm.
Its evidence therefore remains $E_{A_1,1}=1<K/\alphaE$, and the authorization latch remains zero.
In the comparison without authorization, the latch is instead fixed at one.
Since $r_{A_1,1}>0$, the round-two gate satisfies
$g_{A_1,2}=r_{A_1,1}>0$, which in turn gives $w_{A_1,2}>0$.
In \method{}, the gate and decision weight remain zero.

We next prove the monotonicity claim.
Recall that $g=C^E r$ is the arm's
gate.
As defined in Appendix~\ref{app:algorithm}, the quantity $g^\nu$ is a transformation of the same gate used in the decision weight.
Since $\nu=0.3<1$, it magnifies values between zero and one while preserving the endpoints.

For fixed $R\in[0,1]$, let $c=\zeta R$ and write the decision weight as
\[
\omega(g)=g^\nu+cg(1-g^\nu).
\]
Since $0\leq c\leq\zeta=0.8<1$, for $g\in(0,1]$,
\[
\omega'(g)
=
\nu g^{\nu-1}(1-cg)
+c(1-g^\nu)
>0.
\]
For an authorized arm, $C^E=1$ and hence $g=r$.
Before relevance reaches its cap, $r=|M|/s^M$.
So, when $s^M$ and $R$ are held fixed, the decision weight
increases strictly with $|M|$.
This proves that authorization changes the controller.
It need not change the selected arm on every trajectory or improve regret universally. \qed

Together, these results characterize how much evidence a candidate can
provide, when that evidence is sufficient for authorization, and how the latch
changes the controller.
The authorization conditions presented only concern information from the observations
collected from an arm.
Theorem~\ref{thm:adaptive-power} allows their timing to depend on
earlier rewards, while Theorem~\ref{thm:finite-power} gives a sharper
bound under its fixed-prefix, aligned-change assumptions.
Neither result guarantees that a changed arm will be sampled enough
to obtain authorization by a specified calendar round.

\section{Experimental design and additional results}
\label{app:experiments}

Section~\ref{sec:experiments} reports the main empirical findings.
This appendix explains how the environments change across geometries, gives the
complete gate comparison, and reports the comparison with baselines from the
literature.

We use two suites because they answer different questions.
The registered suite was designed to measure the contribution of authorization across a
broad range of arm counts and horizons.
The registered suite follows a protocol fixed before the primary evaluation.
The replay suite was assembled from non-stationary bandit families used in our earlier baseline study and is used
to compare \method{} with established methods.
We later repeated the gate comparison on the replay suite to test whether the registered result
transferred to that different population.

\begin{table}[H]
\centering
\small
\caption{Roles of the two experimental suites. The suites use different
environment generators, so regret levels should not be compared across rows.}
\label{tab:suite-roles}
\begin{tabular}{@{}>{\raggedright\arraybackslash}p{0.14\textwidth}
>{\raggedright\arraybackslash}p{0.31\textwidth}
>{\raggedright\arraybackslash}p{0.25\textwidth}
>{\raggedright\arraybackslash}p{0.19\textwidth}@{}}
\toprule
Suite & Primary purpose & Geometries & Evaluation groups \\
\midrule
Registered &
Contribution of authorization across scale &
$K\in\{4,8,16,32,64\}$ and
$B=T/K\in\{250,625,1250,2500\}$ &
16,380 non-stationary and 1,820 stationary \\
Replay &
Comparison with baselines from the literature and a locked gate transfer &
$(K,T)=(4,2500)$ and $(32,20000)$, both with $B=625$ &
4,500 non-stationary \\
\bottomrule
\end{tabular}
\end{table}

For arm $k$ at round $t$, let $\mu_{k,t}$ be its Bernoulli reward probability
and let $A_t$ be the selected arm.
We measure normalized dynamic pseudo-regret,
\[
 \bar R_T
 =
 \frac1T\sum_{t=1}^T
 \left(
 \max_k\mu_{k,t}-\mu_{A_t,t}
 \right).
\]
A mean trajectory is the complete table of Bernoulli reward probabilities
$\{\mu_{k,t}\}$ for every arm and round.
A potential-reward table contains the corresponding binary reward that each arm would produce if selected in
each round.
A policy observes only the entry for the arm it actually selects.
A geometry specifies the number of arms $K$ and horizon $T$, with
$B=T/K$ denoting the nominal rounds available per arm.
The initial policy seed initializes the random-number stream used for posterior sampling.

An experimental group fixes one mean trajectory, one potential-reward table,
one geometry, and one initial policy seed.
The compared policies share these inputs, so their regret difference forms a paired contrast.
For gate comparisons, we report
\[
 \Delta(c)
 =
 \bar R_T(c)-\bar R_T(\text{\method{}}).
\]
Positive values favor \method{}, while negative values favor comparator $c$.
Lower regret is always better.

\subsection{What does authorization contribute?}

\paragraph{How is the registered suite constructed?}
The suite crosses nine non-stationary mechanisms with 20 geometries.
Stationarity forms a tenth mechanism used for the stationary comparison.
Table~\ref{tab:registered-environments} summarizes the nine non-stationary
mechanisms, while Figure~\ref{fig:primary-environment-atlas} shows one frozen
four-arm example from each.

\begin{table}[p]
\centering
\footnotesize
\caption{Non-stationary mechanisms in the registered suite.}
\label{tab:registered-environments}
\begin{tabular}{@{}p{0.27\textwidth}p{0.65\textwidth}@{}}
\toprule
Mechanism & Mean-trajectory construction \\
\midrule
Isolated incumbent decrease &
The leading arm drops once past an unchanged comparison arm \\
Isolated challenger increase &
A trailing arm rises once past an unchanged comparison arm \\
Sparse sustained shock &
A small subset of arms drops once and remains lower for the rest of the horizon \\
Random breakpoints &
Two to five breakpoints change one arm at a time, alternating direction \\
Global switching &
Across two to six equal segments, the arm means are cyclically reassigned \\
Rapid dense switching &
Every arm changes at each of many closely spaced breakpoints \\
Abrupt change and drift &
All arms drift smoothly, and a subset also undergoes one abrupt decrease \\
Asynchronous per-arm switching &
Arms change once at different times, with shifts alternating in direction \\
Gradual drift &
All arm means vary smoothly along sinusoidal paths \\
\bottomrule
\end{tabular}
\end{table}

\begin{figure}[p]
  \centering
  \includegraphics[width=\textwidth]{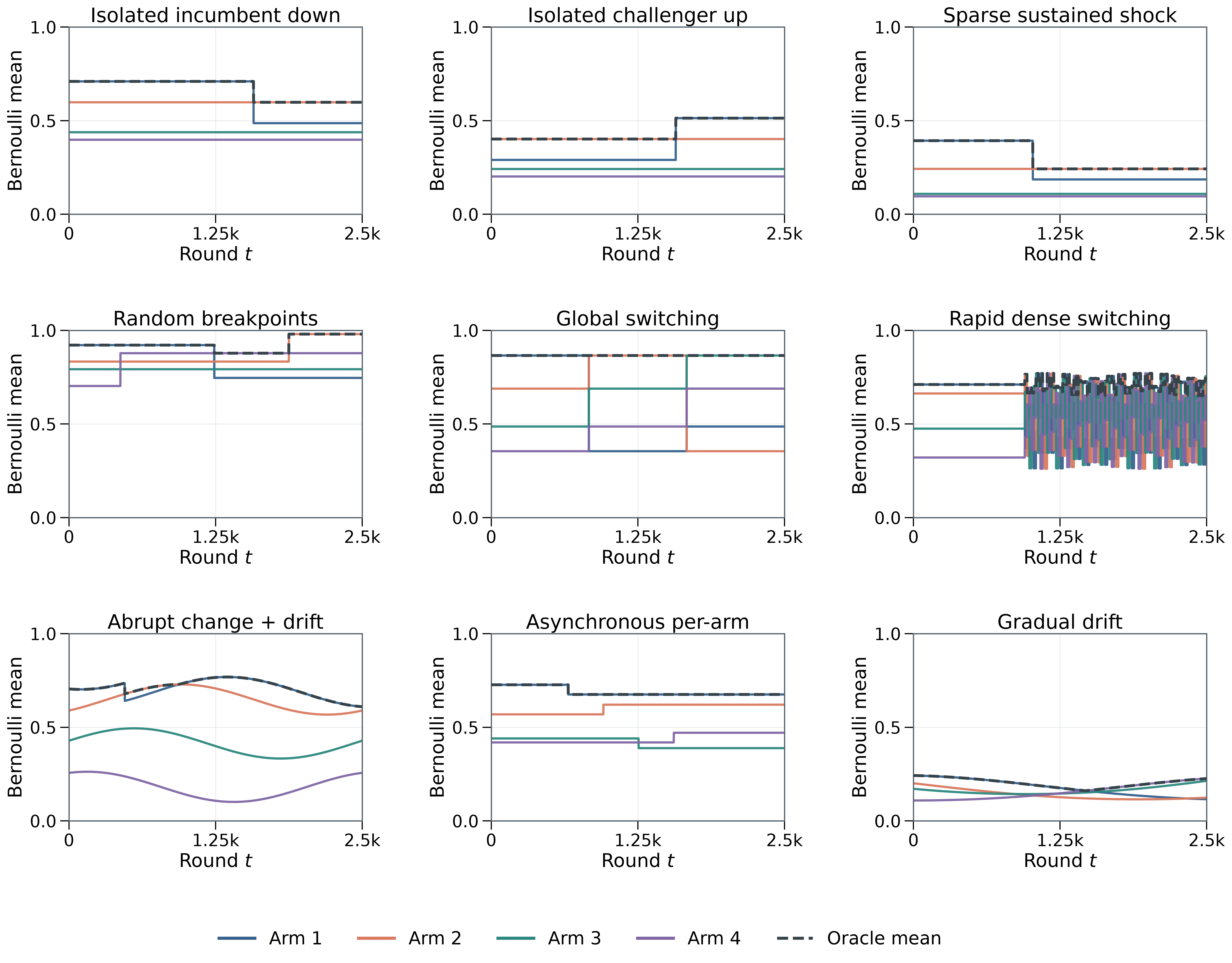}
  \caption{One frozen registered-suite environment for each non-stationary
  mechanism at $K=4$, $B=625$, and $T=2{,}500$. Blue, coral, green, and purple
  identify arms 1--4 across panels using the palette of
  Figure~\ref{fig:authorization-rank-summary}. The dashed black line is
  the instantaneous oracle mean. Within this atlas, color identifies arm
  number only and does not encode an arm's role.}
  \label{fig:primary-environment-atlas}
\end{figure}

Each mechanism and geometry contains 91 randomly generated environments and
27 fixed stress environments.
The stress environments use prespecified boundary settings of the generator parameters rather than randomly sampled
settings.
The primary regret analysis uses the random environments, giving $9\times20\times91=16{,}380$ non-stationary groups and
$1\times20\times91=1{,}820$ stationary groups.

The fixed environments form a separate census of 5,400 groups and are not used for the primary mean or confidence interval.
The pooled contrast across their 4,860 non-stationary groups was instead used as a predeclared breadth
check.
Development, validation, and evaluation used disjoint environments and potential-reward tables.

\paragraph{How does an environment change with $K$ and $T$?}
The $K=4$ environments illustrate each mechanism but are not copied into the
larger geometries.
At every $K$, the generator creates a new $K$-arm mean
trajectory from a deterministic evaluation seed.
The horizon is $T=KB$, so increasing $K$ at fixed $B$ increases the total horizon while retaining the
same nominal number of rounds per arm.

For mechanisms with discrete changes, the round at which each reward
probability changes is specified as a fraction of $T$.
At larger geometries, the generator creates new
trajectories under the same mechanism rules.
Discrete changes are located relative to the horizon through $t/T$, and mechanisms designed to change one
arm continue to affect only one arm.

\paragraph{Does authorization help on the registered suite?}
Removing only the authorization latch gives the open-ramp policy,
$g_{k,t}=r_{k,N_k(t-1)}$.
The short-memory posterior is then eligible to influence decisions as soon as its relevance becomes positive.

We chose this comparison to isolate the contribution of authorization.
The two policies are otherwise identical, but the open ramp allows the relevance
score to act without first requiring an e-process crossing.
We fixed this comparison and its margin before examining the evaluation results.
The margin was cleared only if the lower one-sided $95\%$ confidence bound for
$\Delta(\text{open ramp})$ exceeded $0.0002$.
We formed this bound by resampling environments within each mechanism and geometry and recomputing the
paired regret difference.

Across the 16,380 random non-stationary groups, mean regret was $0.0390$ for
\method{} and $0.0540$ for the open ramp.
The paired contrast was
\[
 \Delta(\text{open ramp})
 =
 0.01497,
 \qquad
 95\%\ \mathrm{CI}
 =
 [0.01461,0.01533],
\]
which clears the predeclared margin.
All nine mechanism means favored \method{}, with contrasts from $0.00164$ to $0.02460$.
All four rounds-per-arm means also favored \method{}, with contrasts from $0.01282$ to
$0.01658$.
The fixed stress census had the same pooled ordering.

The result changes with arm count.
The left panel of Figure~\ref{fig:authorization-rank-summary} shows that the open ramp was lower
at $K=4$, while \method{} was lower at every $K\ge8$.
The registered contrast increased from $-0.00289$ at $K=4$ to $0.03860$ at $K=64$.
Authorization therefore helped on average and in four of the five arm-count rows, but not in
every registered geometry.

\paragraph{What happens when the arms are stationary?}
Only the registered suite contains stationary regret experiments.
In these environments, every arm's reward probability remains fixed throughout the
horizon, although its binary rewards still vary.
Across the 1,820 random stationary groups, mean regret was $0.0115697$ for \method{} and $0.0115673$
for \anchor{}.
Both round to $0.01157$.
During these experiments, \method{} authorized exactly one arm.
The open ramp had mean regret $0.02937$, about two and a half times that of either policy.

Separate from both experimental suites, we also examined how often any arm crossed
the authorization threshold under the prior-predictive null of
Theorem~\ref{thm:main}.
We used three $(K,T)$ settings:
$(4,2500)$, $(32,20000)$, and $(8,50000)$, each with 4,000 runs.
In every run, each arm's reward probability was drawn independently from
$\mathrm{Beta}(1,1)$ and then fixed.
At least one arm crossed the authorization threshold in $4$ of the $4,000$
runs at $(K,T)=(4,2500)$, $1$ of the $4,000$ runs at $(32,20000)$, and $4$ of
the $4,000$ runs at $(8,50000)$.

Since the arm probabilities were redrawn for every run, these rates average
over the $\mathrm{Beta}(1,1)$ prior.

\paragraph{How often does the permanent latch activate?}
By the end of the registered non-stationary runs, the mean fraction of
authorized arms was $9.97\%$. The fraction was $5.77\%$ in gradual drift and
$11.93\%$ in mixed abrupt change and drift. On the replay suite, the overall
fraction was $38.27\%$, including $15.38\%$ in random drift and $14.42\%$ in
the mixed family.

Each fraction counts every arm that was authorized at least once before the
run ended.
Since authorization is never withdrawn, an arm remains counted
even if its later rewards again agree with the stationary model.
The fractions therefore describe whether authorization ever occurred, not whether the
evidence remained current at the time when the run ended.

\paragraph{How do alternative authorization rules compare?}
\label{app:glr-comparator}
We next ask whether the policy's behavior depends on how authorization is
granted.
We replace the e-process crossing with two generalized likelihood-ratio (GLR) rules and leave the rest of the policy unchanged.
All three policies use the same candidate change-points, relevance score,
decision weight, posterior updates, potential-reward table, and initial policy
seed.
Thus, only the rule used to turn on the authorization latch changes.

For candidate $j$, the Bernoulli GLR statistic is
\[
 G_{j,n}
 =
 n_0\kl(\widehat p_0\Vert\widehat p)
 +
 n_1\kl(\widehat p_1\Vert\widehat p).
\]
Here, $n_0=s_j-1$ and $n_1=n-n_0$ are the numbers of arm-local pulls
before and after the candidate change-point.
The corresponding success rates are $\widehat p_0$ and $\widehat p_1$,
while $\widehat p$ is the success rate over all $n$ pulls.
The Bernoulli Kullback--Leibler divergence $\kl$ is defined in
Appendix~\ref{app:evidence}.

We consider two GLR authorization rules.
First, the certified GLR authorizes when the maximum over the geometric grid
crosses
\[
 b_{\mathrm{cert}}(n,\delta)
 =
 2\mathcal T\!\left(
 \frac{\log(3n^{3/2}/\delta)}{2}
 \right)
 +6\log(1+\log n),
 \qquad
 \delta=\alphaE/K.
\]
Here, $\delta$ assigns an equal share of $\alphaE$ to each arm.
The function $\mathcal T$ is defined in Equation~(3) of
\citet{besson2022glr}.
The complete expression for $b_{\mathrm{cert}}$ is their threshold
$\beta(n,\delta)$ from Lemma~2, Equation~(4).
The certified boundary was proved for a detector that checks every possible
split after each pull.
Our implementation checks only the smaller geometric grid, so using the same boundary remains valid and is potentially more
conservative.

The guarantee also holds for any fixed stationary Bernoulli probability.
That is, the probability of crossing is controlled separately for every
possible stationary arm mean.
Theorem~\ref{thm:main}, by contrast, averages over arm means drawn from the $\mathrm{Beta}(1,1)$ prior.
The certified GLR therefore covers a broader stationary model than the e-process guarantee in
this paper.

Second, the practical GLR uses
\[
 b_{\mathrm{prac}}(n,\delta)
 =
 \log(n^{3/2}/\delta).
\]
It is also specified without any calibration, although it has no claimed
finite-sample false-authorization guarantee.
To clarify, neither GLR threshold was fitted to these evaluation outcomes.

\begin{table}[H]
\centering
\small
\caption{Gate comparison on both non-stationary suites. Each difference is
the comparator's regret minus \method{}'s regret, so positive values favor \method{}. Lower regret is better.}
\label{tab:gate-both-suites}
\begin{tabular}{@{}lrrrr@{}}
\toprule
& \multicolumn{2}{c}{Registered}
& \multicolumn{2}{c}{Replay} \\
\cmidrule(lr){2-3}\cmidrule(l){4-5}
Policy & Regret & Difference & Regret & Difference \\
\midrule
\anchor{}       & 0.0494 & $+0.01046$ & 0.2157 & $+0.06277$ \\
Open ramp       & 0.0540 & $+0.01497$ & 0.1414 & $-0.01154$ \\
\method{}       & 0.0390 & reference  & 0.1530 & reference \\
Certified GLR   & 0.0418 & $+0.00282$ & 0.1750 & $+0.02209$ \\
Practical GLR   & 0.0392 & $+0.00017$ & 0.1529 & $-0.00006$ \\
\bottomrule
\end{tabular}
\end{table}

The certified GLR was more conservative and had higher regret than \method{}
on both suites.
The practical GLR was close to \method{} on both.
On the registered suite, its regret was higher by $0.00017$, with interval
$[0.00009,0.00026]$.
On the replay suite, its regret was lower by $0.00006$,
with interval $[-0.00052,0.00040]$.
Relative to \method{}, the practical GLR had $0.44\%$ higher regret on the
registered suite and $0.04\%$ lower regret on the replay suite.
These small pooled differences do not imply cellwise equivalence.
The practical GLR was lower in 13 of the 18 replay cells.

An earlier sensitivity allowed the GLR to use a separately calibrated
threshold in each of the 20 geometries.
For each geometry, we chose the threshold so that the GLR crossed as often as
\method{} on the stationary calibration runs.
Across 5,460 calibration runs, \method{} never crossed in 17 geometries.
In those geometries, matching zero crossings only required placing the GLR
threshold above the largest statistic observed during calibration.
On 1,820 held-out stationary runs, the fitted GLR authorized seven times and
\method{} once.
Across the non-stationary suite, its regret was $0.0387$, compared with $0.0390$
for \method{}, a reduction of $0.66\%$.
The fitted GLR's regret minus \method{}'s was $-0.00026$, with a 95\% interval
$[-0.00034,-0.00017]$.
This count-matching sensitivity gave the GLR geometry-specific Monte Carlo
calibration but no analytic anytime-valid guarantee.

\paragraph{Does the registered result transfer to the replay suite?}
\label{app:replay-gate}
No. On the replay suite, the open ramp was lower in 15 of 18 cells, eight of
nine families, and 3,204 of 4,500 groups.
Its mean regret advantage was $0.01154$, with interval $[0.01089,0.01218]$.
The two negative replay suite points in the left panel of Figure~\ref{fig:authorization-rank-summary}, in orange, show that the
open ramp was lower at both evaluated arm counts.

We added the replay gate comparison to ask whether the registered result carried
to the environments used for the baseline comparison.
It did not.
On the registered suite, \method{} beat the open ramp, while the open ramp beat
\method{} on the replay suite.
The practical GLR remained close to \method{} on both suites, whereas the
certified GLR had higher regret.
Authorization therefore matters, but its effect on regret depends on the
environments being studied.
The comparison only estimates authorization’s overall effect on regret.
It cannot determine whether the difference comes from delayed adaptation,
short-memory forgetting, or subsequent action choices.
Note that these experiments also do not establish any authorization rule as universally best.

\subsection{How does \method{} compare with established baselines?}
\label{app:historical-environments}
\label{app:literature-replay}

\paragraph{How is the replay suite constructed?}
The replay suite evaluates nine non-stationary families at two geometries,
$(K,T)=(4,2500)$ and $(32,20000)$.
Both have $B=T/K=625$, the same nominal number of rounds per arm, giving a total of 18 family--geometry combinations.
Within each combination, every policy is evaluated on the same 250 experimental
settings, yielding a total of 4,500 matched comparisons.

Periods and segment lengths are multiplied by $K/4$ when moving from $K=4$
to $K=32$.
Switching hazards are multiplied by $4/K$, and diffusion increments are divided by $\sqrt{K/4}$.
These changes keep the amount of non-stationarity relative to the rounds available per arm approximately
comparable.
The larger geometry is a newly generated environment, not an eightfold repetition of the four-arm trajectory.

\begin{table}[p]
\centering
\footnotesize
\caption{Families in the replay suite.}
\label{tab:historical-environments}
\begin{tabular}{@{}p{0.22\textwidth}p{0.70\textwidth}@{}}
\toprule
Family & Mean-trajectory construction \\
\midrule
Slow sine &
Smooth arm-specific sinusoidal means with periods 1,000 and 8,000 \\
Fast sine &
The same construction with periods 100 and 800 \\
Abrupt cycle &
Staggered arm-specific level changes in cycles of length 250 and 2,000 \\
Global switching &
All arm means change together with per-round hazard $0.08/K$ \\
Per-arm switching &
Each arm changes independently with per-round hazard $0.08/K$ \\
Local shocks &
Oracle-changing shocks affect a subset of arms \\
Random breakpoints &
Each arm has 4--16 breakpoints and jumps of at least $0.10$ \\
Random drift &
Independent stochastic drift with time-varying sinusoidal amplitude \\
Mixed regime &
Groups of arms are stable, drifting, transiently bursty, or switching \\
\bottomrule
\end{tabular}
\end{table}

Table~\ref{tab:historical-environments} summarizes how the replay families are
constructed, while Figure~\ref{fig:historical-environment-atlas} shows one
frozen four-arm example from each family.

\begin{figure}[p]
  \centering
  \includegraphics[width=\textwidth]{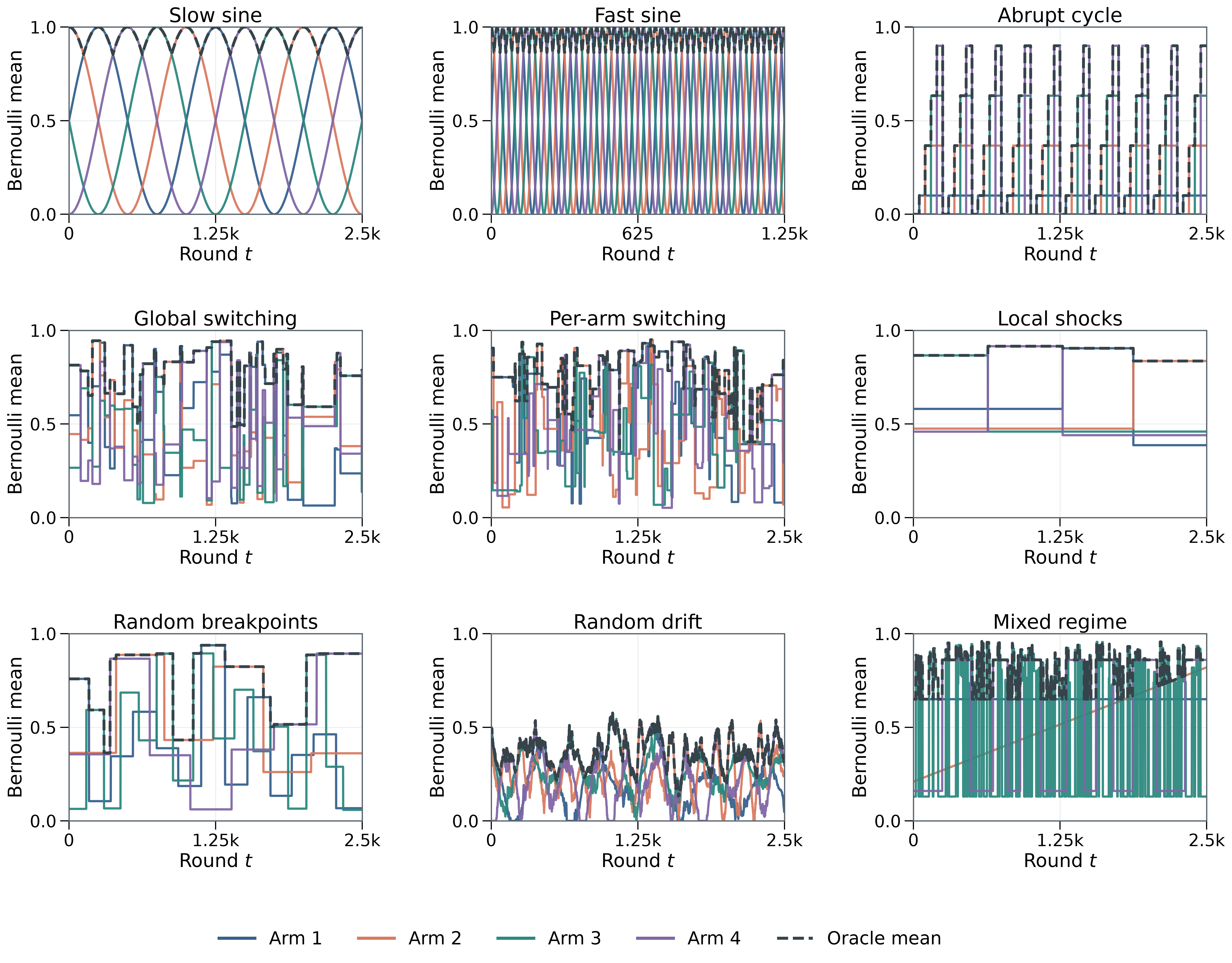}
  \caption{One frozen environment from each replay family at $K=4$ and
  $T=2{,}500$. Blue, coral, teal, and plum identify arms 1--4 across panels
  using the palette of Figure~\ref{fig:authorization-rank-summary}. The
  dashed charcoal line is the instantaneous oracle mean. The examples use
  the frozen visualization seed 1,840,201 and illustrate the family
  constructions, not their empirical frequency. The fast-sine panel shows
  rounds 0--1,250 so that its shorter period remains visible. Every other
  panel shows the full 2,500-round horizon.}
  \label{fig:historical-environment-atlas}
\end{figure}

Each slow-sine, fast-sine, and abrupt-cycle geometry holds one mean trajectory
fixed across 250 reward simulations.
For each of the other six families and their geometries, we generate 50 different
mean trajectories.
We then simulate rewards five times from each trajectory, again giving 250
policy comparisons for that family and geometry.
This separates differences across mean trajectories from randomness in the
rewards generated from the same trajectory.

The original replay analysis calculated each interval from 50,000 repeated
draws.
Within every combination of environment family and geometry, each draw
selected 50 groups of five policy comparisons with replacement.
In the twelve settings with random trajectories, each group contained
one mean trajectory and its five reward repetitions.
In the six settings with a fixed trajectory, each group contained five
repetitions of that trajectory.

After examining the results, we recalculated the primary regret difference
defined as \method{} minus the open ramp.
Each new draw selected 250 individual
policy comparisons with replacement in the fixed-trajectory settings.
In the random-trajectory settings, each new draw selected 50 mean trajectories with replacement
and retained all five reward repetitions for each trajectory.
The corrected interval was $[0.010886,0.012184]$, compared with the original interval
$[0.010899,0.012181]$.
The estimated difference did not change.
The main text reports the corrected primary interval.
The GLR intervals use the original calculation.
The correction therefore does not change the replay-suite finding that the
open ramp has lower regret than \method{}.

\paragraph{Which methods are included?}
The comparison contains \method{} and local implementations of 16 baselines
from the literature. These are TS and \anchor{} \citep{thompson1933,may2012optimistic},
dTS and dOTS \citep{raj2017taming}, Dynamic TS \citep{gupta2011dynamic}, AFF-TS
\citep{lu2017adaptive}, REXP3 \citep{besbes2014rexp3}, SW-UCB and D-UCB
\citep{garivier2008ucb}, M-UCB \citep{cao2018mucb}, CUSUM-UCB
\citep{liu2017cusum}, GLR-klUCB \citep{besson2022glr}, Global-CTS
\citep{mellor2013globalcts}, a one-hot implementation of D-LinUCB
\citep{russac2019weighted}, and Beta-SWTS and $\gamma$-SWGTS
\citep{fiandri2024swts}.

We implemented each baseline ourselves, following its source algorithm and
recommended settings as faithfully as possible.
These are careful implementations rather than runs of the original
authors' code.
No method was tuned on the replay results, and \method{} used one fixed
configuration across all 18 family--geometry combinations.
Since the methods were developed at different times, we do not claim that
each received the same amount of tuning.

Within each of the 18 family--geometry combinations, the displayed methods are
ranked by mean regret, with lower regret receiving a better rank.
We report each method's mean regret and mean rank separately at the two
geometries and across all 18 combinations.
Each family receives equal weight within a geometry, and every combination
receives equal weight in the overall mean.

\begin{table}[H]
\centering
\footnotesize
\setlength{\tabcolsep}{3.5pt}
\caption{Replay performance at each geometry and across all 18
family--geometry combinations. Regret and rank are means, lower values are
better, and ranks are recomputed among the 17 displayed methods.}
\label{tab:literature-ranks}
\begin{tabular}{@{}lrrrrrr@{}}
\toprule
& \multicolumn{2}{c}{$K=4$, $T=2{,}500$}
& \multicolumn{2}{c}{$K=32$, $T=20{,}000$}
& \multicolumn{2}{c}{Overall} \\
\cmidrule(lr){2-3}\cmidrule(lr){4-5}\cmidrule(l){6-7}
Method & Regret & Rank & Regret & Rank & Regret & Rank \\
\midrule
GLR-klUCB        & 0.1070 & 3.56  & 0.1027 & 2.56  & 0.1048 & 3.06 \\
SW-UCB           & 0.0991 & 4.56  & 0.1006 & 3.00  & 0.0999 & 3.78 \\
M-UCB            & 0.0969 & 4.11  & 0.1140 & 5.22  & 0.1055 & 4.67 \\
D-UCB            & 0.0965 & 3.89  & 0.1259 & 7.56  & 0.1112 & 5.72 \\
Global-CTS       & 0.1142 & 5.78  & 0.1194 & 5.89  & 0.1168 & 5.83 \\
dOTS             & 0.1203 & 6.33  & 0.1453 & 8.56  & 0.1328 & 7.44 \\
Dynamic TS       & 0.1377 & 8.78  & 0.1280 & 7.11  & 0.1329 & 7.94 \\
CUSUM-UCB        & 0.1496 & 9.89  & 0.1218 & 6.11  & 0.1357 & 8.00 \\
Beta-SWTS        & 0.1303 & 9.44  & 0.1349 & 7.44  & 0.1326 & 8.44 \\
D-LinUCB one-hot & 0.1003 & 5.22  & 0.1964 & 12.00 & 0.1484 & 8.61 \\
dTS              & 0.1344 & 9.22  & 0.1521 & 10.11 & 0.1432 & 9.67 \\
\method{}        & 0.1782 & 12.56 & 0.1277 & 6.89  & 0.1530 & 9.72 \\
AFF-TS           & 0.1524 & 11.00 & 0.1492 & 9.89  & 0.1508 & 10.44 \\
$\gamma$-SWGTS   & 0.1835 & 13.33 & 0.2445 & 15.00 & 0.2140 & 14.17 \\
\anchor{}        & 0.2136 & 14.44 & 0.2178 & 14.00 & 0.2157 & 14.22 \\
TS               & 0.2204 & 15.33 & 0.2235 & 14.78 & 0.2220 & 15.06 \\
REXP3            & 0.2250 & 15.56 & 0.3117 & 16.89 & 0.2684 & 16.22 \\
\bottomrule
\end{tabular}
\end{table}

Across all 18 combinations, \method{} had mean regret $0.1530$ and mean rank
9.72 of 17.
At $K=4$ its mean regret was $0.1782$ and its mean rank was 12.56, while at
$K=32$ they were $0.1277$ and 6.89.
It improved on \anchor{} in 17 of 18 combinations, reducing overall mean regret
from $0.2157$ to $0.1530$.
This is a meaningful improvement over the protected anchor, but the overall
ranking remains in the middle of the comparison rather than at the top.

\begin{figure}[H]
  \centering
  \includegraphics[width=\textwidth]{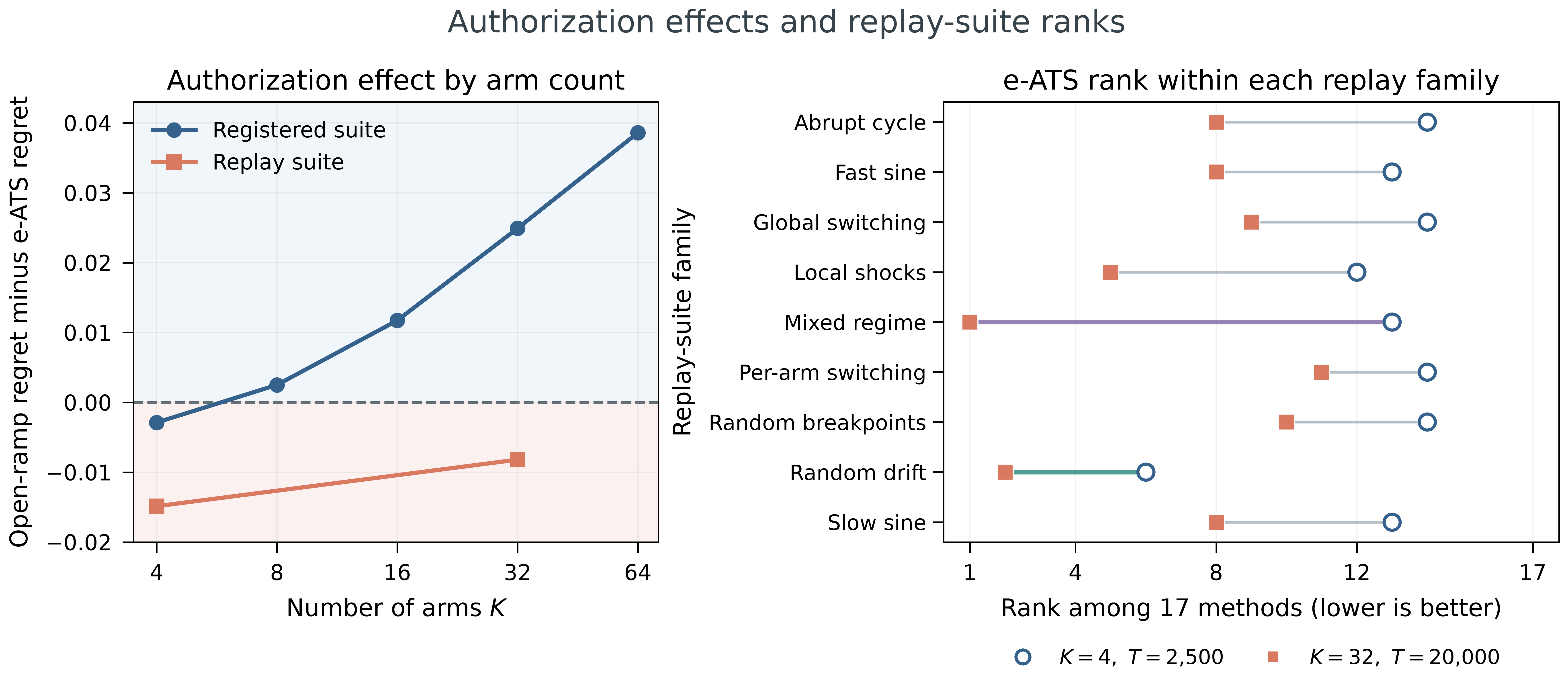}
  \caption{Authorization effects and replay-suite ranks. Left: the open ramp's
  regret minus \method{}'s regret by arm count. Positive values favor
  \method{}, while negative values favor the open ramp. Lines connect
  descriptive arm-count means within a suite. Right: \method{}'s ranks among
  the 17 displayed methods in each replay family. Circles show
  $(K,T)=(4,2500)$ and squares show $(K,T)=(32,20000)$. The colored
  connectors highlight mixed regime and random drift, where \method{} ranked
  first and second at the larger geometry. Lower rank is better.}
  \label{fig:authorization-rank-summary}
\end{figure}

Table~\ref{tab:literature-ranks} shows that the
mean rank was 12.56 at $(K,T)=(4,2500)$ and 6.89 at
$(K,T)=(32,20000)$.
At the larger geometry, \method{} ranked better in all nine families, including
first in the mixed family and second in random drift.
These are the strongest combinations for \method{} and suggest that selective
adaptation may be useful when different arms evolve in different ways.

In our experiments, the geometries differ in arm count, total horizon, and generator scaling.
The rank change cannot be attributed to any one of those differences.
The gate transfer also shows that the open ramp outperformed \method{} on this same replay population.
The first- and second-place cells do not establish that the
e-process gate caused those rankings.

Overall, authorization changes regret, but whether it helps depends on the environments
being studied.
On the registered suite, authorization reduced regret relative to opening the short-memory channel immediately, except at $K=4$.
On the replay suite, the open ramp performed better, while \method{} still improved
substantially over \anchor{} and performed especially well in two $K=32$
cells.
The practical GLR closely matched \method{}.
The certified GLR had higher regret and provided the stronger fixed-mean
guarantee, while the fitted GLR slightly outperformed \method{} after receiving
geometry-specific calibration.
In these suites, \method{} differed little in regret from the untuned practical
detector while retaining its prior-predictive guarantee.

\end{document}